\IfFileExists{iclr2027_conference.sty}{%
  \documentclass{article}%
  \usepackage{iclr2027_conference,times}%
}{%
  \documentclass[11pt]{article}%
}
\usepackage{amsmath,amssymb,mathtools,bm}
\usepackage{amsthm}
\usepackage{booktabs}
\usepackage[caption=false]{subfig} %
\usepackage{capt-of} %
\usepackage{algorithm}
\usepackage{algpseudocode}
\usepackage{xcolor}
\usepackage{pgfplots} %
\pgfplotsset{compat=1.18}
\usepackage{flafter,placeins} %
\usepackage{wrapfig} %
\usepackage{needspace} %
\usepackage{microtype}
\usepackage{xurl} %
\usepackage{hyperref}
\hypersetup{hidelinks} %
\usepackage[nameinlink,noabbrev]{cleveref}

\makeatletter
\@ifundefined{theHALG@line}{%
  \newcommand*{\theHALG@line}{\thealgorithm.\arabic{ALG@line}}%
}{%
  \renewcommand*{\theHALG@line}{\thealgorithm.\arabic{ALG@line}}%
}
\makeatother

\newtheorem{theorem}{Theorem}[section]

\newtheorem{proposition}[theorem]{Proposition}

\theoremstyle{definition}
\newtheorem{assumption}{Assumption}[section]
\theoremstyle{plain}
\crefname{assumption}{assumption}{assumptions}
\Crefname{assumption}{Assumption}{Assumptions}

\newcommand{\E}{\mathbb{E}}
\newcommand{\sg}{\operatorname{sg}}
\newcommand{\ours}{\textnormal{\textsc{DriftSpread}}}

\title{Drift, Then Spread: Learning One-Step Generative Policies for Online Reinforcement Learning}

\iclrfinalcopy

\author{%
\textbf{Chunyu Luo}$^{1,2}$ \quad
\textbf{Hongyu Zhao}$^{1,3}$ \quad
\textbf{Guangyi Ni}$^{4}$ \quad
\textbf{Yue Song}$^{5}$ \quad
\textbf{Lingkai Kong}$^{1}$ \\[3pt]
\makebox[\dimexpr\textwidth-2\tabcolsep\relax][c]{\normalfont\normalsize
$^{1}$The University of Hong Kong \quad $^{2}$Beihang University} \\
\makebox[\dimexpr\textwidth-2\tabcolsep\relax][c]{\normalfont\normalsize
$^{3}$Harbin Institute of Technology, Shenzhen \quad
$^{4}$Independent Researcher \quad $^{5}$Tsinghua University} \\[2pt]
\makebox[\dimexpr\textwidth-2\tabcolsep\relax][c]{\normalfont\normalsize
Correspondence to: \texttt{lingkaikong@hku.hk}.}
}

\begin{document}

\maketitle
\vspace{-12pt}
\lhead{Preprint}

\begin{abstract}
Generative policies offer expressive action distributions
for online reinforcement learning (RL), but many diffusion-
and flow-based policies require costly iterative sampling.
We propose \ours{}, a one-step generative policy based on
Drifting for online maximum-entropy (MaxEnt) RL.
To incorporate entropy without a tractable policy density,
we build on an energy-based view of Drifting and derive
critic-guided Drift and entropy-induced Spread from the
negative MaxEnt objective.
Our method learns these updates sequentially in the same
actor, using action regression for value improvement and
distribution matching to capture the effect of entropy.
The actor updates require neither direct samples from
the critic-defined Boltzmann target nor policy-density
or score estimation.
Under exact fitting with a fixed critic, we prove
convergence to the Boltzmann target in the small-step
limit followed by the long-time limit.
On MuJoCo, \ours{} achieves the highest aggregate
normalized return among all evaluated methods,
while outperforming the diffusion- and flow-based
baselines in both training and inference efficiency.
\end{abstract}

\section{Introduction}
\label{sec:introduction}

\noindent\begin{minipage}{\linewidth}
  \makeatletter
  \def\@captype{figure}
  \makeatother
  \centering
  \subfloat[Normalized return versus average training time.\label{fig:introduction-training-panel}]{%
    \includegraphics[width=4.8in]{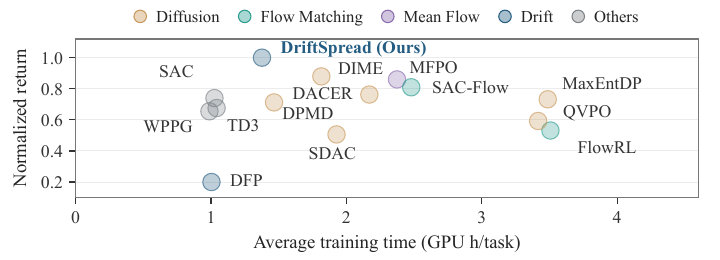}%
  }
  \par\smallskip
  \subfloat[Sampling steps and inference time.\label{fig:introduction-inference-panel}]{%
    \begin{minipage}{\linewidth}
      \centering
      \fontsize{8}{9.5}\selectfont
      \setlength{\tabcolsep}{0.8pt}
      \begin{tabular*}{\linewidth}{@{\extracolsep{\fill}}l*{14}{c}@{}}
        \toprule
        Method & SAC & TD3 & WPPG & \textbf{Ours} & DFP & FlowRL & DACER & MFPO &
        \shortstack{MaxEnt\\DP} & SDAC & DPMD & \shortstack{SAC-\\Flow} & DIME & QVPO \\
        \midrule
        Steps & 1 & 1 & 1 & \textbf{1} & 1 & 1 & 20 & 2 & 20 & 20 & 20 & 4 & 16 & 20 \\
        ms/action & 0.10 & 0.10 & 0.17 & \textbf{0.15} & 0.16 & 0.35 & 0.43 & 0.36 & 0.57 & 0.85 & 0.87 & 1.20 & 1.24 & 1.83 \\
        \bottomrule
      \end{tabular*}
    \end{minipage}%
  }
  \captionof{figure}{\textbf{Normalized return versus average training time (a),
and average inference time per action (b),
across five MuJoCo tasks.}}
  \label{fig:experiment-speed-performance}
\end{minipage}
\par\medskip

Generative policies offer expressive action distributions
for online reinforcement learning (RL).
Diffusion- and flow-based policies can capture multimodal
distributions that are difficult to represent with a single
Gaussian \citep{dong2025maximum,celik2025dime}.
However, many of these approaches generate actions through
iterative sampling, requiring repeated network evaluations
during training and inference.
Recent MeanFlow-based methods reduce this cost through
few-step or one-step action generation
\citep{dong2026mean,chen2025onestep,kim2026score}.

Drifting offers a different way to train one-step
generators \citep{deng2026drifting}.
At each training iteration, it uses target data to
move generated samples toward the desired distribution,
then trains the generator to reproduce the moved
samples from the same inputs.
Through these repeated updates, the generator learns
to produce samples from the target distribution in
a single forward pass at inference.
This approach has shown better performance than
MeanFlow variants in one-step image generation on
ImageNet \citep{deng2026drifting}.
These results motivate exploring Drifting for online
RL, where its use remains less studied.

We consider the maximum-entropy (MaxEnt) RL objective,
which balances expected return with policy entropy
to encourage exploration \citep{haarnoja2018soft}.
The critic defines a Boltzmann target distribution
for policy improvement, but does not directly provide
samples from it \citep{li2026reverse}.
We therefore adopt an energy-based view of Drifting
\citep{gretton2026wasserstein,han2026onestep},
using the negative MaxEnt policy-improvement objective
as the energy.
This energy guides updates to actions generated by
the current policy.
For the value term, critic gradients tell us how to
move actions toward higher values.
The entropy term is harder to use: an implicit actor
generates actions but does not provide a tractable
density for computing their distribution's entropy.
We therefore need training targets that capture
the effect of entropy without explicitly computing it.

We propose \ours{}, a one-step generative policy in the
Drifting model family for online MaxEnt RL.
Our key idea is to derive two updates from this energy
and fit them sequentially to the same actor through
repeated Drift--Spread cycles.
In each cycle, Drift uses critic gradients to move
generated actions toward higher values.
The actor then learns to reproduce these updated
actions from the same inputs through regression.
Spread captures the entropy contribution through
diffusion of the action distribution.
We approximate this diffusion by adding appropriately
scaled independent noise to actions from the updated
actor, then train the actor to match the resulting
distribution.
This allows the actor to learn the effect of entropy
without explicitly computing it.
For a fixed critic, we prove
that the learned policy converges to the Boltzmann
target distribution in the small-step, long-time limit.

Our main contributions are:
(1) We introduce \ours{}, a Drifting model for online
MaxEnt RL with expressive policies and one-step action
generation.
(2) We develop sequential Drift--Spread training through
critic-guided regression and  entropy regularization to
address the challenge of entropy computation for implicit
policies.
(3) On MuJoCo, \ours{} outperforms 13 baselines
in aggregate normalized return, with lower training
costs than the diffusion, flow, and MeanFlow baselines
(Figure~\ref{fig:experiment-speed-performance}).

\section{Related Work}
\label{sec:related-work}

\paragraph{Generative policies for online RL.}
Diffusion and flow models provide expressive action
distributions for online RL.
One line of work optimizes policies by backpropagating
critic gradients through diffusion or flow sampling
\citep{wang2024diffusion,lv2025flow}.
DIME also differentiates through diffusion sampling
while optimizing a variational lower bound on the
MaxEnt objective \citep{celik2025dime}.

Another line of work constructs training targets
for learning critic-defined Boltzmann distributions.
DQS uses weighted averages of critic gradients
to construct score targets \citep{jain2025sampling}.
SDAC and MaxEntDP instead use critic values to weight
denoising supervision and noise estimates, respectively
\citep{ma2025efficient,dong2025maximum}.
Reverse flow matching unifies these gradient-expectation
and noise-expectation formulations through posterior-mean
estimation
\citep{li2026reverse}.

 These approaches use iterative action sampling,
motivating few-step and one-step policies for online RL.
MeanFlow was introduced for one-step generative modeling
and learns average velocities \citep{geng2025mean}.
Recent online RL methods use this formulation
to learn policies with fewer generation steps.
FPMD trains rectified-flow and MeanFlow policies
through policy mirror descent, with both variants
supporting one-step inference \citep{chen2025onestep}.
MFPO trains few-step MeanFlow policies using velocity
estimates for critic-defined Boltzmann targets
\citep{dong2026mean}, while SOM constructs velocity
targets from critic-based score estimates to train
one-step MeanFlow policies \citep{kim2026score}.

\paragraph{Drifting models and policy learning.}
Drifting learns one-step generators by repeatedly
moving generated samples and training the generator
to reproduce their updated positions.
This approach has achieved state-of-the-art performance
in one-step image generation on ImageNet
\citep{deng2026drifting}.
Subsequent work relates specific Drifting fields
to Wasserstein gradient flows
\citep{gretton2026wasserstein} and uses
optimal-transport couplings to construct sample
updates \citep{he2026sinkhorn}.
W-Flow derives sample movements from an objective
over distributions and trains a one-step generator
on the resulting samples \citep{han2026onestep}.

Drifting has also been applied to policy learning.
For offline RL, DriftQL combines critic-guided
policy improvement with a Drifting regularizer
based on the behavior distribution
\citep{houssaini2026drift}.
For offline-to-online RL, Drifting Field Policy
derives a KL-regularized policy update through
Wasserstein gradient flow and constructs training
targets from actions ranked by the critic
\citep{koo2026drifting}.
Our work focuses on online MaxEnt RL from scratch.

\section{Preliminaries}
\label{sec:preliminaries}

\subsection{Maximum-entropy RL and soft policy iteration}
\label{sec:prelim-rl}

We consider a discounted Markov decision process (MDP) with
state space $\mathcal S$, continuous action space
$\mathcal A\subseteq\mathbb R^d$, transition kernel
$P(\cdot\mid s,a)$, reward $r(s,a)$, and discount factor
$\gamma\in[0,1)$.
A policy $\pi(a\mid s)$ specifies a density over actions
at each state.
Maximum-entropy (MaxEnt) RL learns a policy that maximizes
discounted rewards together with policy entropy:
\begin{equation}
    J(\pi)
    =\E_\pi\left[
        \sum_{t=0}^{\infty}\gamma^t
        \bigl(r(s_t,a_t)
        +\lambda\mathcal H(\pi(\cdot\mid s_t))\bigr)
    \right],
    \label{eq:maxent-objective}
\end{equation}
where
$\mathcal H(\pi(\cdot\mid s))
=-\E_{a\sim\pi(\cdot\mid s)}[\log\pi(a\mid s)]$
is the conditional differential entropy, and $\lambda>0$
controls the tradeoff between reward and action diversity.
The expectation is over trajectories generated by $\pi$
from a fixed initial-state distribution.

Soft policy iteration alternates between policy evaluation
and policy improvement \citep{haarnoja2018soft}.
Evaluation holds $\pi$ fixed and computes its soft
action-value function $Q^\pi$, which accounts for rewards
and future entropy bonuses.
This function satisfies $Q^\pi=\mathcal T^\pi Q^\pi$,
where the soft Bellman operator is
\begin{equation}
    (\mathcal T^\pi Q)(s,a)
    =r(s,a)+\gamma\E_{s'}\left[
        \E_{a'\sim\pi(\cdot\mid s')}[Q(s',a')]
        +\lambda\mathcal H(\pi(\cdot\mid s'))
    \right],
    \label{eq:soft-bellman-operator}
\end{equation}
where $s'\sim P(\cdot\mid s,a)$.

Improvement then holds the critic $Q$ fixed and updates
the policy at each state to maximize
$\E_{a\sim\pi(\cdot\mid s)}[Q(s,a)]
+\lambda\mathcal H(\pi(\cdot\mid s))$.
When optimizing over all action densities, the maximizing
policy has the Boltzmann form
\begin{equation}
    \pi_Q^*(a\mid s)
    =\frac{\exp(Q(s,a)/\lambda)}{Z_Q(s)},
    \qquad
    Z_Q(s)=\int_{\mathcal A}\exp(Q(s,a)/\lambda)\,da,
    \label{eq:boltzmann-policy}
\end{equation}
assuming $0<Z_Q(s)<\infty$.
Thus, each fixed critic defines a target action distribution
for policy improvement.
The practical task is to learn a policy that approximates
this distribution and generates actions efficiently.

\begin{figure}[!t]
    \centering
    \subfloat[\label{fig:framework-training}]{%
        \includegraphics[width=0.49\linewidth]{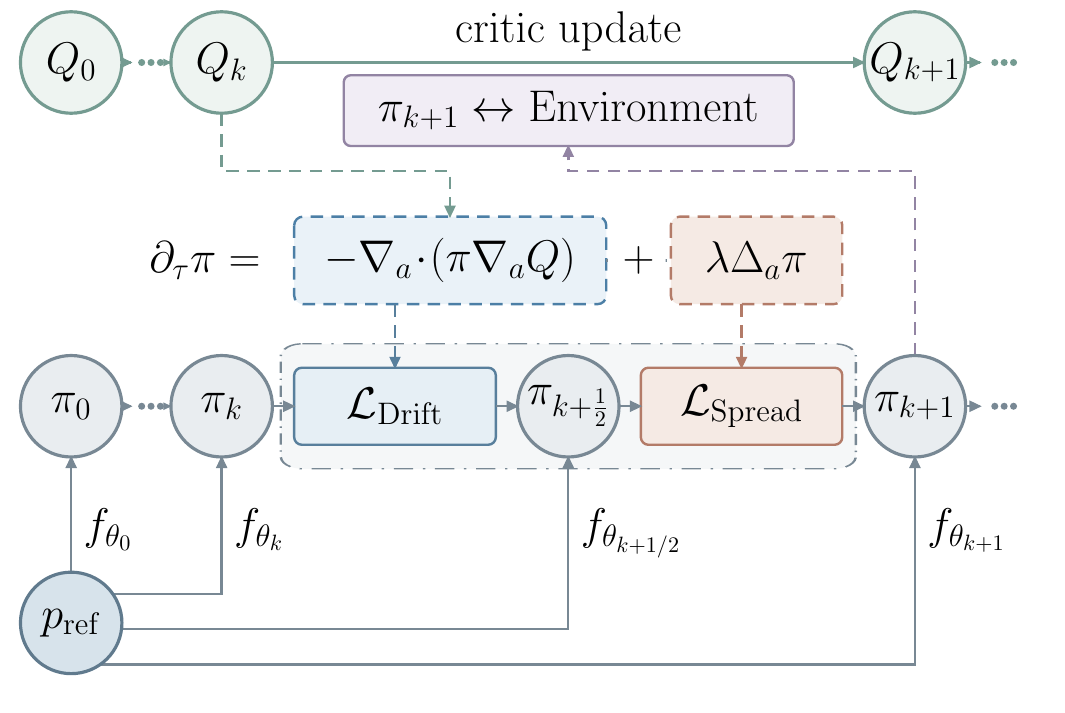}%
    }\hfill
    \subfloat[\label{fig:framework-energy}]{%
        \includegraphics[width=0.49\linewidth]{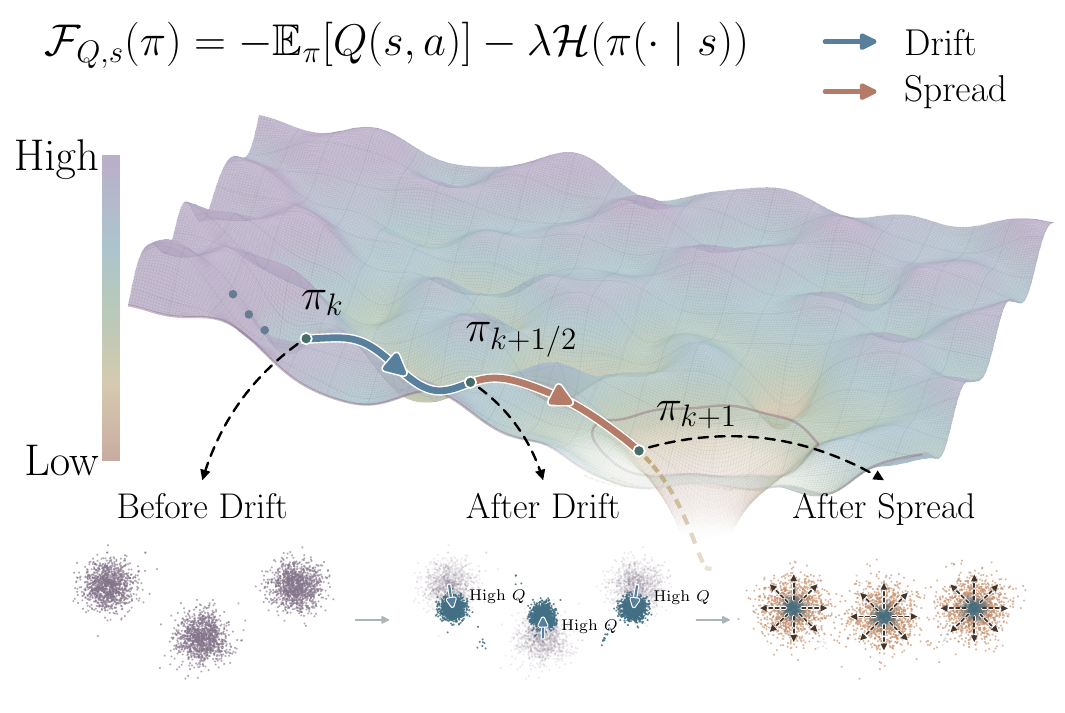}%
    }
\caption{
\textbf{Overview of \ours{}.}
(a) Training alternates between critic learning and
sequential Drift--Spread actor updates, using action
regression for Drift and distribution matching for Spread.
The maps $f_\theta$ represent the same actor at
different training stages, each generating an
action sample from reference noise in one forward pass.
(b) The surface schematically represents the negative
MaxEnt objective over policy distributions.
The samples below illustrate how Drift moves actions
toward higher critic values and Spread then broadens
the post-Drift distribution to capture the effect
of entropy.
}
\label{fig:framework}
\end{figure}

\subsection{Drifting models and Wasserstein gradient flows}
\label{sec:prelim-drifting}

Drifting trains a one-step generator by repeatedly moving its
samples and fitting the generator to their new positions
\citep{deng2026drifting}.
The generated distribution evolves during training, while
sampling requires only one forward pass.

The original Drifting field combines attraction toward data
samples with repulsion from generated samples.
Energy-based extensions start from an objective $\mathcal F(q)$
over the generated distribution $q$.
For example, $\mathcal F(q)$ can measure the difference between
$q$ and a target distribution.
They use a Wasserstein gradient flow to derive sample movements
that reduce this objective
\citep{gretton2026wasserstein,han2026onestep}.
This flow describes continuous-time steepest descent of
$\mathcal F$ under the Wasserstein distance
\citep{jordan1998variational,ambrosio2008gradient}.
The first variation $\delta\mathcal F/\delta q$ describes how
$\mathcal F$ responds to small changes in $q$;
its negative spatial gradient gives the sample velocity.
Under suitable regularity and decay conditions,
\begin{equation}
    V_q(x)
    =-\nabla_x\frac{\delta\mathcal F}{\delta q}(q)(x),
    \qquad
    \partial_\tau q_\tau
    +\nabla_x\cdot(q_\tau V_{q_\tau})=0.
    \label{eq:energy-velocity}
\end{equation}
Here, $\tau$ denotes continuous flow time.
The first equation specifies how individual samples move,
and the second describes how their density changes
as a result.

To train a generator with this velocity,
\citet{han2026onestep} use a forward Euler step to construct
targets for fitting.
At iteration $k$, let $q_k$ denote the distribution of
$x=f_{\theta_k}(z)$, where $z\sim p_{\mathrm{ref}}$.
When the velocity can be evaluated at $q_k$, setting
$V_k=V_{q_k}$ with a step size $\eta>0$ gives
\begin{equation}
    \widetilde x=\sg[x+\eta V_k(x)],
    \qquad
    f_{\theta_{k+1}}(z)\approx\widetilde x.
    \label{eq:drifting-target}
\end{equation}
The generator is fitted using the same input $z$, while $\sg$
keeps the target fixed.
The generator thus learns to reproduce the moved samples
in a single forward pass.

\section{Method}
\label{sec:method}

\ours{} learns a one-step generative policy through
two sequential updates to the same actor:
critic-guided Drift and entropy-induced Spread.
We first derive these updates from the MaxEnt objective
(\cref{sec:method-objective}), then show how to train
the actor using action regression for Drift and
distribution matching for Spread
(\cref{sec:method-training}).
We next analyze fixed-critic convergence under exact fitting
(\cref{sec:method-guarantees}), and finally describe
the practical training procedure
(\cref{sec:method-practical}).
Figure~\ref{fig:framework} illustrates the overall method.

\subsection{Decomposing MaxEnt policy improvement}
\label{sec:method-objective}
\label{sec:method-splitting}

The Drifting framework updates a generator by fitting
its outputs to targets displaced by a velocity field.
For MaxEnt policy improvement, we derive this field
from the negative MaxEnt objective.
For a fixed state $s$, critic $Q$, and temperature
$\lambda>0$, define the distributional energy
\[
    \mathcal F_{Q,s}(\pi)
    =-\E_\pi[Q(s,a)]
     -\lambda\mathcal H(\pi(\cdot\mid s)).
\]
For smooth positive densities with sufficient decay
and finite second moments, the energy--velocity
relation in \cref{eq:energy-velocity} gives
\[
    V_\pi(a\mid s)
    =-\nabla_a\frac{\delta\mathcal F_{Q,s}}{\delta\pi}(a)
    =\nabla_aQ(s,a)-\lambda\nabla_a\log\pi(a\mid s).
\]
The critic gradient is directly available, but the
entropy contribution depends on the policy score
$\nabla_a\log\pi$, which an implicit actor does not
provide directly.

To implement the entropy contribution without
estimating this score, we consider the corresponding
evolution of the policy distribution.
Substituting $V_\pi$ into the continuity equation gives
\begin{equation}
    \partial_\tau\pi
    =\underbrace{-\nabla_a\cdot(\pi\nabla_aQ)}
       _{\text{critic-guided drift}}
     +\underbrace{\lambda\Delta_a\pi}
       _{\text{entropy-induced spreading}}.
    \label{eq:method-fokker-planck}
\end{equation}
The drift term transports actions toward higher
critic values. We approximate this transport by
an explicit step along the critic gradient.

The spreading term describes diffusion.
When this diffusion acts alone, it preserves the
mean and increases the variance of each action
coordinate by $2\lambda\eta$ over a time interval
$\eta$.
We approximate its effect over a small time step
by adding independent noise with the corresponding
scale:
\[
    \epsilon_\eta=\sqrt{2\lambda\eta}\,\xi,
    \qquad
    \E[\xi]=0,
    \qquad
    \E[\xi\xi^\top]=I_d.
\]
Here, $\xi$ follows a fixed noise distribution,
such as Gaussian, Gamma, or uniform noise,
standardized to satisfy the moment conditions above.

Let $K_{\lambda,\eta}$ denote the distribution of
$\epsilon_\eta$.
Starting from $a\sim\pi_k(\cdot\mid s)$, an ideal
cycle applies Drift followed by Spread:
\[
    a
    \xrightarrow{\mathrm{Drift}}
    a+\eta\nabla_a Q(s,a)
    \xrightarrow{\mathrm{Spread}}
    a+\eta\nabla_a Q(s,a)+\epsilon_\eta,
\]
where $\epsilon_\eta\sim K_{\lambda,\eta}$ is
independent of $a$.

\subsection{Learning Drift and Spread sequentially}
\label{sec:method-drift}
\label{sec:method-training}

As illustrated in Figure~\ref{fig:framework},
we update the same actor in two stages, using action
regression for Drift and distribution matching for Spread: $\theta_k
    \xrightarrow{\mathrm{Drift}}\theta_{k+1/2}
    \xrightarrow{\mathrm{Spread}}\theta_{k+1}.$
The critic remains fixed throughout each cycle.

\paragraph{Learning Drift from the critic.}
Following Drifting \citep{deng2026drifting}, we turn
the critic-gradient step into a regression target.
For each input $(s,z)$, we displace the current
actor's output along the critic gradient and fit
the resulting target using the same input:
\begin{align}
    \widetilde a_{k+1/2}(s,z)
    &=\sg\!\left[
        f_{\theta_k}(s,z)
        +\eta\nabla_a Q\bigl(s,f_{\theta_k}(s,z)\bigr)
    \right],
    \label{eq:method-drift-target}\\
    \mathcal L_{\rm Drift}(\theta;\theta_k)
    &=\tfrac12\E_{s,z}
    \left\|
        f_\theta(s,z)-\widetilde a_{k+1/2}(s,z)
    \right\|^2.
    \label{eq:method-drift-loss}
\end{align}
Here, states $s$ are sampled from the replay buffer
and $z\sim p_{\mathrm{ref}}$.
The stop-gradient operation $\sg$ holds the target
fixed during fitting.
Starting from $\theta_k$, this fit produces the
post-Drift actor $f_{\theta_{k+1/2}}$.

\paragraph{Learning Spread through distribution matching.}
Spread trains the actor to represent the distribution of
$a+\epsilon_\eta$, where
$a=\sg[f_{\theta_{k+1/2}}(s,z)]$.
Let $\widetilde\pi_{k+1}(\cdot\mid s)$ denote this target
distribution.
Because the noise is independent and has zero mean,
paired regression using the same input $z$ gives
\[
    \E_{z,\epsilon_\eta}
    \|f_\theta(s,z)-a-\epsilon_\eta\|^2
    =
    \E_z\|f_\theta(s,z)-a\|^2
    +\E\|\epsilon_\eta\|^2.
\]
The noise contributes only a constant, so the post-Drift
actor already minimizes this loss.
To learn the spreading effect, we instead match the
actor and target distributions.

Our actor generates samples but does not provide a
tractable action density, making direct KL minimization
difficult.
We therefore use squared maximum mean discrepancy (MMD),
which can be estimated from samples without evaluating
their densities \citep{li2015generative}:
\begin{equation}
    \mathcal L_{\mathrm{Spread}}
    (\theta;\theta_{k+1/2})
    =
    \E_s\operatorname{MMD}_{h}^{2}
    \bigl(
        \pi_\theta(\cdot\mid s),
        \widetilde\pi_{k+1}(\cdot\mid s)
    \bigr),
    \label{eq:method-mmd-population}
\end{equation}
where $h$ is a positive-definite kernel on the action
space.
The target distribution is held fixed during this update.

We choose a kernel with a frequency representation
and compute this objective using its equivalent
expression in terms of characteristic functions (CFs)
\citep{ansari2020characteristic}.
Define the actor's CF as
$\varphi_\theta(\omega\mid s)
:=\E_z[e^{i\omega^\top f_\theta(s,z)}]$,
where $\omega\in\mathbb R^d$.
Since the spreading noise is independent of the
post-Drift action, the target CF is
\begin{equation}
    \widetilde\varphi_{k+1}(\omega\mid s)
    =
    \widehat K_{\lambda,\eta}(\omega)
    \varphi_{\theta_{k+1/2}}(\omega\mid s),
    \label{eq:method-cf-target}
\end{equation}
where $\widehat K_{\lambda,\eta}$ is the known noise CF.
We estimate the post-Drift actor's CF from its samples
and multiply it by this factor.
For Gaussian spreading noise
$K_{\lambda,\eta}=\mathcal N(0,2\lambda\eta I_d)$,
$\widehat K_{\lambda,\eta}(\omega)
=e^{-\lambda\eta\|\omega\|^2}$.
Appendix~\ref{sec:appendix-cf-targets}
gives the CFs for the other noise choices.

Starting from $\theta_{k+1/2}$, we obtain
$\theta_{k+1}$ by minimizing the following objective
with the target CF held fixed:
\begin{equation}
    \mathcal L_{\mathrm{Spread}}(\theta;\theta_{k+1/2})
    =
    \E_s\E_{\omega\sim\rho}
    \left|
        \varphi_\theta(\omega\mid s)
        -
        \sg\bigl[
            \widetilde\varphi_{k+1}(\omega\mid s)
        \bigr]
    \right|^2.
    \label{eq:method-cf-population}
\end{equation}
We estimate this loss using states sampled from the
replay buffer, generated actions, and frequencies
drawn from $\rho$.
The actor and target CFs are evaluated at the same
frequencies.

Here, $\rho$ determines the kernel through
$h(x,y)=\E_{\omega\sim\rho}
[\cos(\omega^\top(x-y))]$.
When $\rho=\mathcal N(0,\ell^{-2}I_d)$,
this objective equals squared MMD with the Gaussian
kernel
$h(x,y)=\exp(-\|x-y\|^2/(2\ell^2))$.
We use this choice in our experiments.

\begin{wrapfigure}[16]{r}{0.49\linewidth}
  \centering
  \includegraphics[width=\linewidth]
    {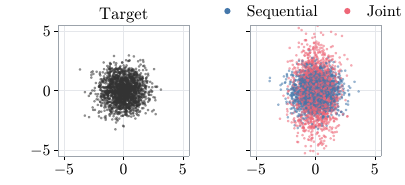}
  \par
  \caption{\textbf{Sequential versus joint fitting for a
  Gaussian Boltzmann target.}
  Left: target samples.
  Right: samples from the fitted policies.
  Joint's loss weight is tuned to minimize KL
  divergence to the known target.}
  \label{fig:method-sequential-joint}
\end{wrapfigure}
\noindent\textbf{Why sequential fitting?}\quad
Sequential fitting first minimizes the Drift
regression loss $\mathcal L_{\mathrm{Drift}}$, then
constructs and fits the Spread target using the
updated actor.
A joint alternative constructs both targets from
the actor before Drift and minimizes
$\mathcal L_{\mathrm{Drift}}
+\beta\mathcal L_{\mathrm{Spread}}$,
where $\beta>0$ weights the Spread MMD loss.
Because regression and MMD penalize changes in the
policy differently, a single weight need not recover
the intended balance between Drift and Spread
across action coordinates.
Figure~\ref{fig:method-sequential-joint} illustrates
this issue in a two-dimensional Gaussian example.
After repeated fitting, Sequential closely matches
the Boltzmann target, whereas Joint remains too
narrow in one coordinate and too wide in the other,
despite tuning its weight.
Appendix~\ref{sec:appendix-design-analysis}
analyzes this bias.

\subsection{Theoretical analysis}
\label{sec:method-guarantees}

We next show that sequential fitting can recover
the Boltzmann target for a fixed critic.
Under exact fitting, Drift reproduces the
critic-guided action update, and Spread matches
the distribution obtained by adding noise to the
post-Drift actions.
Each cycle therefore implements the distribution
update derived in \cref{sec:method-objective}.

Fix a state $s$, critic $Q$, and temperature $\lambda>0$.
Let $\pi_k^{(\eta)}$ denote the actor's policy after
$k$ cycles with step size $\eta$, starting from
the same initial policy $\pi_0$ for every $\eta$.
To compare different step sizes at a common
evolution time $\tau$, we take
$k=\lfloor\tau/\eta\rfloor$.

\begin{theorem}[Boltzmann convergence]
\label{thm:main-fixed-critic-convergence}
Under the normalizability assumption
$0<Z_Q(s)<\infty$, suppose that
(i) the population Drift regression loss
$\mathcal L_{\mathrm{Drift}}$ and Spread MMD loss
$\mathcal L_{\mathrm{Spread}}$ at state $s$
are zero after their respective updates in every cycle,
{and the MMD kernel $h$ is characteristic,
meaning that zero MMD implies identical distributions};
(ii) $Q(s,\cdot)$ is twice continuously differentiable
and satisfies
$\langle a,\nabla_a Q(s,a)\rangle
\le C(1+\|a\|^2)$
for all $a$ and some finite $C\ge0$, which prevents the continuous dynamics from escaping to infinity in finite time; and
(iii) the Spread noise is
$\epsilon_\eta=\sqrt{2\lambda\eta}\,\xi$,
where $\xi$ has an $\eta$-independent distribution
with $\E[\xi]=0$, $\E[\xi\xi^\top]=I_d$,
and $\E\|\xi\|^3<\infty$,
with draws independent across cycles and independent
of the current action.
Then the fitted policies satisfy
\begin{equation}
    \lim_{\tau\to\infty}\lim_{\eta\downarrow0}
    \pi_{\lfloor\tau/\eta\rfloor}^{(\eta)}
    =\pi_Q^*
    \qquad\text{weakly}.
    \label{eq:method-fixed-critic-limit}
\end{equation}
\end{theorem}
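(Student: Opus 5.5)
First I would reduce the fitted dynamics to an explicit Markov chain. By assumption (i), the Drift fit is exact: for $p_{\mathrm{ref}}$-a.e.\ $z$ we have $f_{\theta_{k+1/2}}(s,z)=f_{\theta_k}(s,z)+\eta\nabla_aQ(s,f_{\theta_k}(s,z))$. Hence $\pi_{k+1/2}$ is the pushforward of $\pi_k$ under the map $T_\eta(a)=a+\eta\nabla_aQ(s,a)$. The Spread loss is zero and $h$ is characteristic, so $\pi_{k+1}=\pi_{k+1/2}*K_{\lambda,\eta}$. Therefore $\pi_k^{(\eta)}$ is exactly the law of the chain
\[
A_{k+1}=A_k+\eta\nabla_aQ(s,A_k)+\sqrt{2\lambda\eta}\,\xi_{k+1},\qquad A_0\sim\pi_0,
\]
with i.i.d.\ $\xi_k$ independent of $A_k$. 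This is an Euler--Maruyama scheme, possibly with non-Gaussian increments, for the Langevin SDE $dX_t=\nabla_aQ(s,X_t)\,dt+\sqrt{2\lambda}\,dW_t$. The generator of this SDE is $\mathcal Lg=\nabla Q\cdot\nabla g+\lambda\Delta g$, whose Fokker--Planck equation is \cref{eq:method-fokker-planck}.

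\textbf{Inner limit $\eta\downarrow0$ at fixed $\tau$.} I would show that the law of $A_{\lfloor\tau/\eta\rfloor}$ converges weakly to the law $\mu_\tau$ of $X_\tau$. The route is the martingale-problem or Stroock--Varadhan diffusion-approximation theorem applied to the piecewise-constant interpolation $X^\eta_t=A_{\lfloor t/\eta\rfloor}$. The conditional drift per step is $\eta\nabla Q$. The conditional covariance is $2\lambda\eta I+\eta^2\nabla Q\nabla Q^\top$. The third moment $\E\|\sqrt{2\lambda\eta}\xi\|^3=O(\eta^{3/2})=o(\eta)$, which gives the Lindeberg-type negligibility of jumps. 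For $g\in C_c^\infty$, a Taylor expansion then gives $\E[g(A_{k+1})-g(A_k)\mid A_k]=\eta\mathcal Lg(A_k)+o(\eta)$ locally uniformly.

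The coefficients are only locally Lipschitz ($Q\in C^2$), so I would localize with stopping times at exit from balls. Well-posedness and non-explosion of the limit SDE come from the one-sided growth condition (ii). Applying Itô's formula to $\|x\|^2$ gives $\mathcal L\|x\|^2\le 2C(1+\|x\|^2)+2\lambda d$, and Gronwall then yields uniform-in-$\eta$ second-moment bounds for the chain up to time $\tau$. The discrete analogue has the extra term $\eta^2\|\nabla Q\|^2$, which is handled on the localized event. These bounds give tightness and let me remove the localization.

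This step is the main obstacle. The discrete chain may blow up on large sets, since $\nabla Q$ can grow superlinearly and the Euler scheme is not guaranteed to be stable there. So I will need the localization argument: on $\{\sup_{k\le\tau/\eta}\|A_k\|\le R\}$ the chain agrees with a scheme using truncated, Lipschitz coefficients, for which convergence is standard. The probability of the complement is controlled by exit probabilities of the limiting SDE plus the weak convergence of the truncated schemes, and it vanishes as $R\to\infty$ uniformly in small $\eta$. The limit along $\eta\downarrow0$ exists, rather than only along subsequences, because the martingale problem has a unique solution.

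\textbf{Outer limit $\tau\to\infty$.} It remains to show $\mu_\tau\to\pi_Q^*$ weakly. The density $\pi_Q^*\propto e^{Q/\lambda}$ is a stationary solution of \cref{eq:method-fokker-planck}, since $\nabla Q\,\pi^*-\lambda\nabla\pi^*=0$, so the diffusion is reversible with respect to it, and it is a probability measure because $0<Z_Q<\infty$. The SDE has smooth, locally Lipschitz coefficients and nondegenerate noise, so it is strong Feller and irreducible, and a non-explosive such diffusion with an invariant probability measure is positive recurrent. Then the ergodic theorem for positive Harris recurrent diffusions (Meyn--Tweedie, or Khasminskii) gives $\|\mu_\tau-\pi_Q^*\|_{TV}\to0$ for any initial law $\pi_0$, which implies weak convergence. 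An alternative, if needed, is the relative entropy: $\frac{d}{d\tau}\mathrm{KL}(\mu_\tau\|\pi^*)=-\lambda\,I(\mu_\tau\|\pi^*)\le0$, combined with LaSalle-type compactness. I would use the Harris argument, since it avoids requiring $\mathrm{KL}(\pi_0\|\pi^*)<\infty$. Combining the inner and outer limits gives \cref{eq:method-fixed-critic-limit}.
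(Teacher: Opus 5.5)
Your argument is correct, and its outer structure matches the paper. The paper also reduces exact fitting to the chain $A_{k+1}=A_k+\eta\nabla_aQ(s,A_k)+\sqrt{2\lambda\eta}\,\zeta_k$. It also rules out explosion of the limit diffusion via It\^o's formula on $1+\|A\|^2$. For $\tau\to\infty$ it simply cites Theorem~2.1 of Roberts and Tweedie (1996), which to my understanding rests on the strong-Feller, irreducibility and Harris-recurrence reasoning you sketch. That includes the step, which both you and the paper state quickly, that the stationary Fokker--Planck solution $\pi_Q^*$ is genuinely invariant for the non-explosive diffusion.

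The genuine difference is the inner limit $\eta\downarrow0$. You use generator convergence and the Stroock--Varadhan martingale-problem theorem: you check one-step conditional moments and invoke well-posedness to get convergence along the whole family. The paper instead exploits that the noise is additive:
\begin{itemize}
\item Donsker's theorem makes the interpolated partial sums $\sqrt{\eta}\sum_{j<k}\zeta_j$ converge to Brownian motion.
\item Skorokhod representation upgrades this to almost-sure uniform convergence on a common space.
\item After replacing the drift by a bounded, globally Lipschitz $b_R$, a deterministic Gr\"onwall estimate shows the discrete paths driven by these noise paths converge uniformly to the diffusion.
\item The truncation is removed because the limit diffusion stays in a ball; this is the same localization as your exit-probability argument.
\end{itemize}
The paper's route is more elementary and yields a pathwise coupling. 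Yours needs heavier machinery but would survive state-dependent or multiplicative noise, where the map from noise path to action path is in general no longer continuous.

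One correction: drop the claim that It\^o plus Gr\"onwall gives uniform-in-$\eta$ second-moment bounds for the chain, and hence tightness. Assumption (ii) permits superlinear inward drift, e.g.\ $Q(s,a)=-\|a\|^4$. For such drifts, explicit Euler chains with Gaussian increments have moments that diverge as $\eta\downarrow0$ (Hutzenthaler, Jentzen and Kloeden). Tightness of the fixed-time laws must therefore come from your exit-probability localization. That suffices, because weak convergence only involves bounded test functions.
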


The two limits are taken in order:
first, taking $\eta\downarrow0$ at fixed $\tau$
recovers the MaxEnt diffusion in
\cref{eq:method-fokker-planck};
then, letting $\tau\to\infty$ brings this diffusion
to the Boltzmann target.
The proof follows from
\cref{thm:cf-target-convergence} in the appendix.

\subsection{Implementation Details}
\label{sec:method-practical}
\label{sec:method-value-evaluation}

We implement \ours{} as an off-policy actor--critic
algorithm that reuses past transitions stored in
a replay buffer.
Training combines data collection, critic learning,
and sequential Drift--Spread actor updates.

\noindent\textbf{Data collection.}\quad
At each environment step, we sample an action
$a\sim\pi_\theta(\cdot\mid s)$ from the current actor
and execute it in the environment.
We store the resulting transition
$(s,a,r,s',\delta)$ in the replay buffer for
subsequent critic and actor updates.

\noindent\textbf{Critic training.}\quad
Following prior generative-policy methods
\citep{jain2025sampling,li2026reverse},
we use standard critic targets without an entropy
term, while retaining entropy regularization in
the actor updates.
We maintain two critics and use the smaller of
their target values to reduce overestimation
\citep{fujimoto2018addressing}.
For a replay transition $(s,a,r,s',\delta)$,
both critics minimize the squared error to the
detached target
$y=r+\gamma(1-\delta)\min_{i=1,2}\bar Q_i(s',a')$,
where $a'\sim\pi_{\bar\theta}(\cdot\mid s')$.
Here, $\bar Q_i$ and $\pi_{\bar\theta}$ denote
the target critics and target policy, respectively,
and $\delta$ indicates termination.

\noindent\textbf{Actor training.}\quad
After each critic update, we hold the critics fixed
and perform one Drift--Spread cycle using states
sampled from the replay buffer.
For these states, the current actor generates the
action samples used to construct the Drift targets.
We take one optimizer step on the Drift regression
loss $\mathcal L_{\mathrm{Drift}}$.
We then construct the Spread target from the updated
actor and take one optimizer step on the Spread
MMD loss $\mathcal L_{\mathrm{Spread}}$.

The full training procedure is provided in
Appendix~\ref{sec:method-critics}.

\FloatBarrier
\section{Experiments}
\label{sec:experiments}

We first use a toy experiment to examine the role
of Spread in learning a Boltzmann target distribution.
We then compare control performance and computational
efficiency on MuJoCo.
We also investigate how the choice of spreading noise
and sequential versus joint fitting affect policy learning.

\pgfplotsset{compat=1.18}

\providecommand{\RingCompactFigure}[1][.52\linewidth]{%
  \includegraphics[width=#1]{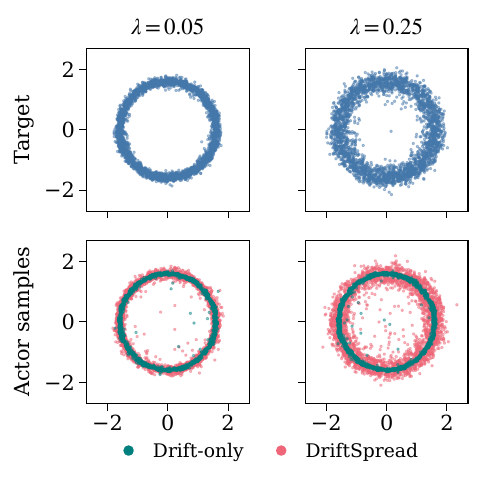}%
}
\providecommand{\RingSweepFigure}[1][\linewidth]{%
  \includegraphics[width=#1]{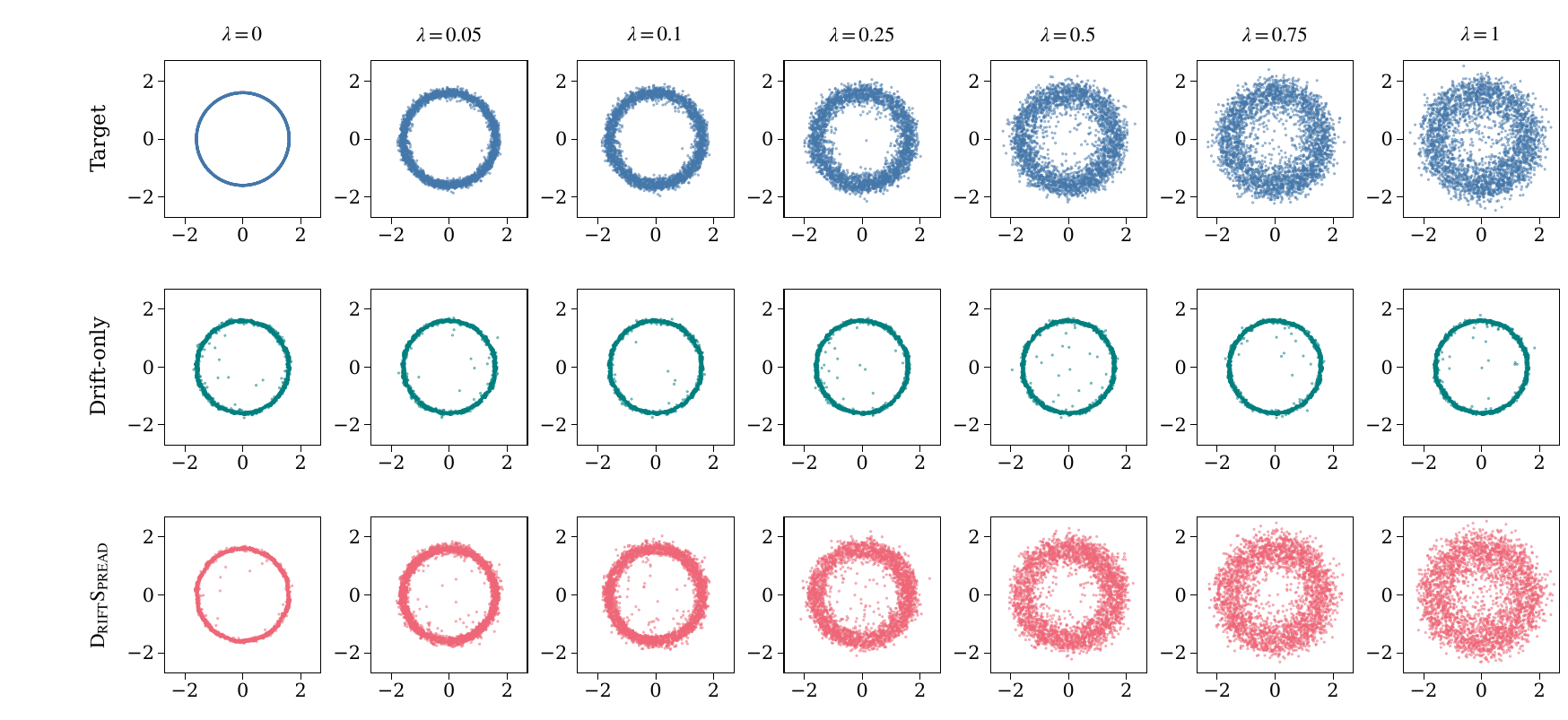}%
}

\vspace{-8pt}
\subsection{Toy Experiment}
\label{sec:experiment-cfm-toy}

\begin{wrapfigure}[15]{l}{0.38\textwidth}
    \centering
    \setlength{\parskip}{0pt}
    \setlength{\abovecaptionskip}{-7pt}
    \vspace{-2.4em}
    \RingCompactFigure[\linewidth]\par
    \caption{
    \textbf{Learning a ring-shaped Boltzmann distribution.}
    }
    \label{fig:experiment-cfm-toy}
\end{wrapfigure}
We first evaluate \ours{} on a two-dimensional
Boltzmann target
$\pi^*_{Q,\lambda}(a)\propto\exp(Q(a)/\lambda)$,
with $Q(a)=-(\|a\|_2^2-r_0^2)^2/(4\sigma_Q^2)$.
The target concentrates around a ring of radius
$r_0$, with a width that increases with temperature
$\lambda$.
To assess the role of Spread, we compare \ours{}
with Drift-only using the same MLP actor architecture.
Neither method uses samples from the target
distribution during training.

Figure~\ref{fig:experiment-cfm-toy} shows target
samples in blue in the top row, and samples from
Drift-only (teal) and \ours{} (red) in the bottom row.
At the higher temperature, the blue target samples
spread more widely around the ring.
The teal samples from Drift-only remain tightly
concentrated near the ring at both temperatures.
By contrast, the red samples from \ours{} capture
the wider spread of the higher-temperature target,
showing the effect of Spread on the learned distribution.

\par
\ifnum\value{WF@wrappedlines}>1
  \vspace{\dimexpr\value{WF@wrappedlines}\baselineskip\relax}
\fi
\WFclear

\begin{figure}[!t]
  \centering
  \includegraphics[width=0.95\linewidth]{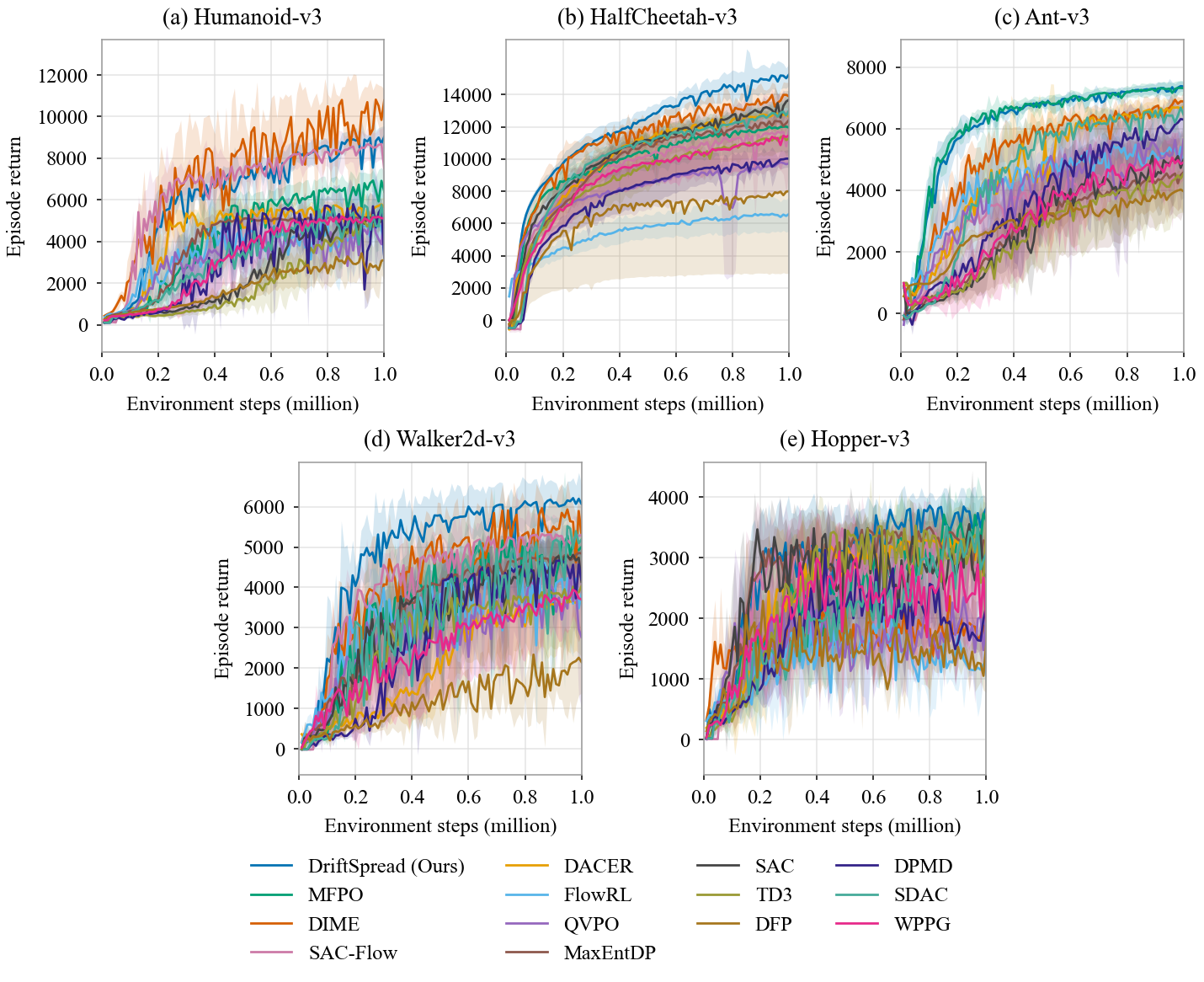}
  \caption{\textbf{Comparison with 13 baselines on five MuJoCo-v3 locomotion tasks.}
  Curves show five-seed mean evaluation returns with sample standard deviations
  shaded and no smoothing.}
  \label{fig:experiment-mujoco}
\end{figure}

\subsection{MuJoCo Benchmarks}
\label{sec:experiment-setup}
\label{sec:experiment-comparison}
\label{sec:experiment-training-cost}

\paragraph{Experimental setup.}
We evaluate the control performance and computational
efficiency of \ours{} on five MuJoCo-v3 tasks:
HalfCheetah, Hopper, Walker2d, Ant, and Humanoid.
For each task, every method is trained for one million
environment interactions using five random seeds.

We use SAC \citep{haarnoja2018soft}
and TD3 \citep{fujimoto2018addressing}
as standard actor--critic baselines.
To compare with more expressive generative policies,
we include diffusion-based methods:
DIME \citep{celik2025dime},
DACER \citep{wang2024diffusion},
QVPO \citep{ding2024diffusion},
MaxEntDP \citep{dong2025maximum},
and DPMD and SDAC \citep{ma2025efficient},
as well as the flow-based methods
FlowRL \citep{lv2025flow}
and SAC-Flow \citep{zhang2026sacflow}.
To compare with policies designed for efficient
action generation, we include MeanFlow-based
MFPO \citep{dong2026mean}, which generates actions
in a few steps.
We further include DFP \citep{koo2026drifting}
and WPPG \citep{zhu2026wasserstein},
two recent one-step policies based on
Wasserstein gradient flows.

To keep these comparisons consistent, we align
batch size, network capacity, and
update-to-data ratio across methods.
We implement \ours{} in JAX and use the same
framework for all baselines.\footnote{For baselines
originally implemented in PyTorch, we port their
official implementations to JAX for consistent
runtime comparisons.}
Training and inference time are measured on
the same NVIDIA RTX 4090 GPU.
Full configurations are
provided in
Appendix~\ref{sec:appendix-experimental-details}.

\paragraph{Performance and efficiency.}
Figure~\ref{fig:experiment-speed-performance}
shows that \ours{} achieves the highest aggregate
normalized return among the compared methods,
with lower training costs than the diffusion,
flow, and MeanFlow baselines.
It also reduces average inference time to
$0.15$\,ms per action, compared with $1.24$\,ms
for DIME and $0.36$\,ms for the two-step MeanFlow
policy MFPO.
These comparisons show improvements in both
control performance and computational efficiency.

Among one-step generative policies, \ours{} achieves
higher aggregate returns than DFP and WPPG with
similar inference times, although these baselines
require slightly less training time.
DFP was originally designed for offline-to-online
fine-tuning \citep{koo2026drifting}, whereas our
experiments evaluate online learning from scratch.
WPPG fits noisy action targets through paired
regression using the same generator inputs
\citep{zhu2026wasserstein}.
This regression averages out independent zero-mean
target noise in expectation.
Our Spread update instead learns this distribution
through distribution matching.
This difference may contribute to the observed
performance gap.

The per-task learning curves
(Figure~\ref{fig:experiment-mujoco}) complement
these aggregate results.
\ours{} achieves the highest late-training mean
returns on HalfCheetah-v3 and Walker2d-v3,
while remaining competitive with the strongest
baselines on the other tasks.

\paragraph{Impact of Spreading Noise.}
We examine how the choice of noise in Spread affects
policy learning.
We compare Gaussian noise with the Gamma-mixture,
Resolvent, and Uniform alternatives defined in
Appendix~\ref{sec:appendix-cf-targets}.
All four choices have zero mean and the same covariance.
We keep the actor architecture and training procedure unchanged within each task.

Figure~\ref{fig:experiment-cf-target-ablation} shows
that Gaussian noise achieves the highest or near-highest
late-training mean returns across the three tasks.
On Humanoid-v3 and HalfCheetah-v3, Gaussian reaches
higher returns than Gamma and Uniform, while
Resolvent remains closer to Gaussian.
The differences are smaller on Ant-v3.
These results support Gaussian noise as our default choice.

\begin{figure}[!htbp]
  \centering
  \includegraphics[width=\linewidth]
    {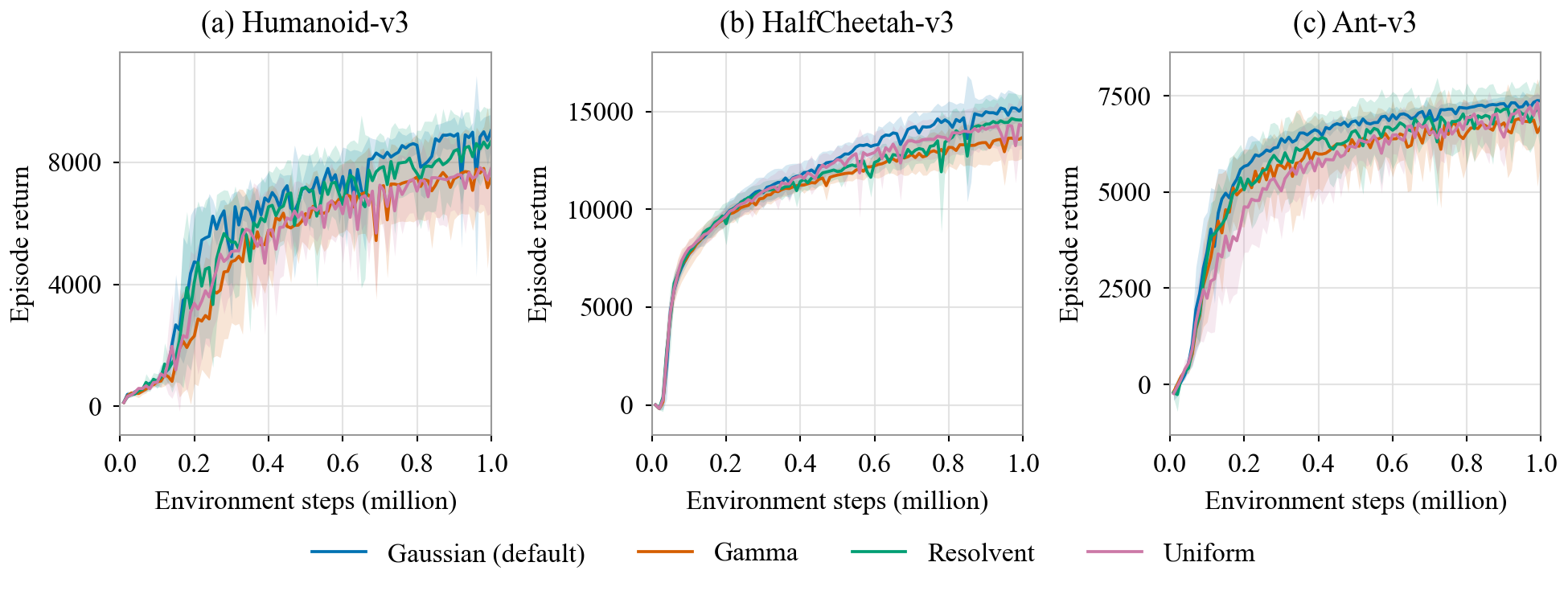}
  \caption{\textbf{Effect of spreading noise on MuJoCo.}
  The four noise choices use the same actor learning
  rates within each task.}
  \label{fig:experiment-cf-target-ablation}
\end{figure}

\paragraph{Sequential versus joint fitting.}
In addition to the Gaussian example in
Figure~\ref{fig:method-sequential-joint},
we also compare sequential and joint fitting
on MuJoCo.
On Humanoid-v3, sequential fitting achieves higher
 mean returns than joint fitting
across all tested loss weights.
The learning curves are
provided in Appendix~\ref{sec:experiment-joint-beta}.

\section{Conclusion and Limitations}
\label{sec:conclusion}

We introduced \ours{}, a Drifting-based approach to
learning expressive one-step policies for online RL.
Building on an energy-based view of Drifting, we derive
critic-guided Drift and entropy-induced Spread from
the MaxEnt objective.
The two updates are learned sequentially by the same
actor: Drift moves actions toward higher values,
while Spread matches a broader distribution constructed
from the updated actor.
This design incorporates the effect of entropy without
estimating policy densities or scores, while allowing
each action sample to be generated in one forward pass.
Under exact fitting, our analysis establishes convergence
to the fixed-critic Boltzmann target in the small-step,
long-time limit.
MuJoCo experiments show that \ours{} combines strong
control performance with efficient training and inference,
supporting Drifting as a practical approach to
generative policy learning.

Our work leaves two directions for future research.
First, our evaluation is limited to simulation
environments.
Testing on real-world control tasks would help
assess robustness to observation noise and changes
in system dynamics.
Second, we use a fixed spreading-noise distribution
and scale during training.
Adapting the spreading strength as learning progresses
may better balance exploration and value improvement.

\FloatBarrier %
\clearpage

\subsection*{AI Use Statement}

We used generative AI tools (GPT-5.6 Sol and GPT-6 Sol)
to draft and polish manuscript text, assist with coding and identify related work.
These tools were not used to generate synthetic data.
All AI-assisted material was reviewed by at least one
coauthor, and AI-assisted code was checked
manually.
The main technical contributions are the authors'
original work.
The authors take responsibility for the final content,
including the text, claims, code, and other research
artifacts produced with AI assistance.

\subsection*{Ethics Statement}

Our experiments use synthetic examples and MuJoCo simulations.
The study involves no human participants or personal or sensitive
data, and the method has not been deployed in safety-critical settings.
We are not aware of ethical concerns specific to this work beyond
those common to reinforcement learning research.

\subsection*{Reproducibility Statement}
\phantomsection
\label{sec:reproducibility}

{We provide the complete training algorithm for \ours{} in
Appendix~\ref{sec:method-critics}.}
Experimental settings and implementation details are provided in
Appendix~\ref{sec:appendix-experimental-details}.
We use publicly available MuJoCo simulation environments.
Source code and configurations for the main reinforcement-learning
experiments are available in an anonymous repository at
\url{https://anonymous.4open.science/r/DriftSpread-7A36}.
All experiments were run on an NVIDIA RTX 4090 GPU.

\bibliography{references}
\bibliographystyle{iclr2027_conference}
\clearpage
\appendix

\makeatletter
\newcommand{\AppendixLabelAlias}[2]{%
  \@ifundefined{r@#2}{}{%
    \protected@write\@auxout{}{\string\newlabel{#1}{\csname r@#2\endcsname}}%
  }%
  \@ifundefined{r@#2@cref}{}{%
    \protected@write\@auxout{}{\string\newlabel{#1@cref}{\csname r@#2@cref\endcsname}}%
  }%
}
\makeatother

\addtocontents{toc}{\protect\setcounter{tocdepth}{2}}

\begingroup
\edef\SavedTocDepth{\number\value{tocdepth}}
\renewcommand{\contentsname}{Appendix Contents}
\fontsize{9}{10.5}\selectfont
\setlength{\parskip}{0pt}
\hypersetup{hidelinks,linktoc=all}

\setcounter{tocdepth}{-1}

\makeatletter
\renewcommand*\l@section[2]{%
  \ifnum\c@tocdepth>\z@
    \addvspace{0.2em}%
    \begingroup
      \bfseries
      \@dottedtocline{1}{0pt}{1.5em}{#1}{#2}%
    \endgroup
  \fi
}
\renewcommand*\l@subsection{%
  \@dottedtocline{2}{1em}{2.4em}%
}
\makeatother

\tableofcontents

\setcounter{tocdepth}{\SavedTocDepth}
\endgroup

\section{Additional Method Details}
\label{sec:appendix-method}

We first derive Drift and Spread from the MaxEnt actor
objective. We then explain how the actor learns these
updates through action regression and distribution
matching, and give the complete training algorithm.

\subsection{Deriving Drift and Spread}
\label{sec:appendix-entropy-cf}
\label{sec:appendix-method-details}
\label{sec:method-supp-variation}

Fix a state $s$, critic $Q$, and temperature $\lambda>0$.
We consider actions $a\in\mathbb R^d$.
The negative MaxEnt policy-improvement objective is
\begin{equation}
    \mathcal F_{Q,s}(\pi)
    =-\E_{a\sim\pi(\cdot\mid s)}[Q(s,a)]
     -\lambda\mathcal H(\pi(\cdot\mid s)).
    \label{eq:appendix-action-energy}
\end{equation}
\AppendixLabelAlias{eq:method-action-energy}{eq:appendix-action-energy}
When $0<Z_Q(s)<\infty$, substituting the Boltzmann
density from \cref{eq:boltzmann-policy} gives
$\mathcal F_{Q,s}(\pi)
=\lambda D_{\mathrm{KL}}(\pi\|\pi_Q^*)
-\lambda\log Z_Q(s)$.
Thus, minimizing this energy over policy distributions
gives the Boltzmann target.

\paragraph{From the objective to distribution updates.}
Assume the policy density is smooth and positive,
the energy is finite, and the density decays sufficiently
fast for the following calculations.
Differentiating the energy with respect to the density
gives its first variation:
\begin{equation}
    \frac{\delta\mathcal F_{Q,s}}{\delta\pi}(\pi)(a)
    =-Q(s,a)+\lambda(1+\log\pi(a\mid s)).
    \label{eq:appendix-first-variation}
\end{equation}
\AppendixLabelAlias{eq:method-first-variation}{eq:appendix-first-variation}
The sample velocity given by the Wasserstein gradient
flow in \cref{eq:energy-velocity} is therefore
\begin{equation}
    V_\pi(a\mid s)
    =\nabla_aQ(s,a)-\lambda\nabla_a\log\pi(a\mid s).
    \label{eq:appendix-action-velocity}
\end{equation}
The critic gradient is directly available, but the
entropy term depends on $\nabla_a\log\pi$, which the
implicit actor does not provide.
To avoid computing this term, we consider how the
policy density changes.
Substituting the velocity into the continuity equation
and using $\pi\nabla_a\log\pi=\nabla_a\pi$ gives
\begin{equation}
    \partial_\tau\pi
    =-\nabla_a\cdot(\pi\nabla_aQ)
     +\lambda\Delta_a\pi.
    \label{eq:appendix-action-fokker-planck}
\end{equation}
The first term moves actions toward higher critic values.
The second spreads their distribution through diffusion.

\paragraph{Approximating Drift and Spread.}
For a step size $\eta>0$, Drift approximates the first
term by moving each action to
$a^{\mathrm D}=a+\eta\nabla_aQ(s,a)$.
Spread approximates the diffusion term by adding
independent noise
$\epsilon_\eta=\sqrt{2\lambda\eta}\,\zeta$
to the post-Drift actions.
The distribution of $\zeta$ does not depend on $\eta$
and satisfies
\begin{equation}
    \E[\zeta]=0,
    \qquad
    \E[\zeta\zeta^\top]=I_d,
    \qquad
    \E\|\zeta\|^3<\infty.
    \label{eq:appendix-noise-conditions}
\end{equation}
The added noise therefore has mean zero and covariance
$2\lambda\eta I_d$; it need not be Gaussian.

To see why this approximates diffusion, consider a smooth
function $g$ with bounded derivatives through order three.
A second-order Taylor expansion gives
\begin{equation}
    \E_\zeta[g(a+\epsilon_\eta)]
    =g(a)+\lambda\eta\Delta_a g(a)+O(\eta^{3/2}).
    \label{eq:appendix-spread-expansion}
\end{equation}
The zero mean removes the linear term, and the covariance
determines the quadratic term.
To first order in $\eta$, this is the change produced
by diffusion with coefficient $\lambda$.

Applying Drift followed by Spread gives
\begin{equation}
    a^+=a+\eta\nabla_aQ(s,a)+\epsilon_\eta,
    \qquad
    \epsilon_\eta=\sqrt{2\lambda\eta}\,\zeta.
    \label{eq:method-langevin-step}
\end{equation}
Noise draws are independent across cycles and independent
of the current action.
Appendix~\ref{sec:appendix-cf-targets} gives the noise
distributions used in our experiments.

Gaussian noise is a useful special case.
Adding $\epsilon_\eta\sim\mathcal N(0,2\lambda\eta I_d)$
gives the exact entropy-only diffusion over duration
$\eta$.
The other listed choices approximate this diffusion
and share its small-step limit.

{Appendix~\ref{sec:appendix-theory} gives the convergence analysis.}
We next describe how the actor learns the Drift targets
through regression and the Spread target through
distribution matching.

\subsection{Computing the Actor Losses}
\label{sec:appendix-actor-implementation}
\label{sec:appendix-spread-fitting}

Drift provides an updated action for each generator
input, so it can be learned through regression.
Spread specifies a broader action distribution.
We explain why this requires a different loss, then
give the sample-based losses used in training.
Throughout this subsection, the actor generates
$a=f_\theta(s,z)$ with $z\sim p_{\mathrm{ref}}$.

\subsubsection{Why Spread Uses Distribution Matching}
\label{sec:method-supp-pairing}
\label{sec:method-noise-history}
\label{sec:appendix-background-wgf}

Spread trains the actor to represent the distribution
produced by adding noise to its post-Drift actions.
This allows the next Drift update to start from the
broader distribution.

Fix a state $s$ and write
$\bar a(z)=\sg[f_{\theta_{k+1/2}}(s,z)]$.
If we apply regression to the noisy target
$\bar a(z)+\epsilon_\eta$ using the same input $z$,
then
\begin{equation}
    \begin{aligned}
        \E_{z,\epsilon_\eta}
        \|f_\theta(s,z)-\bar a(z)-\epsilon_\eta\|^2
        &=
        \E_z\|f_\theta(s,z)-\bar a(z)\|^2
        +\E\|\epsilon_\eta\|^2 \\
        &=
        \E_z\|f_\theta(s,z)-\bar a(z)\|^2
        +2\lambda\eta d.
    \end{aligned}
    \label{eq:method-cf-pairing}
\end{equation}
\AppendixLabelAlias{eq:wppg-noisy-regression}{eq:method-cf-pairing}
The cross term vanishes because the noise is independent
of $z$ and has zero mean.
Its remaining contribution is a constant, so the
post-Drift actor already minimizes this regression loss.
The loss therefore does not teach the actor to reproduce
the added spread.

We instead use the Spread MMD loss in
\cref{eq:method-mmd-population} to match the actor and
target distributions.

\subsubsection{Losses Computed from Actor Samples}
\label{sec:appendix-cf-fitting}
\label{sec:method-supp-fitting}

\paragraph{Drift regression.}
Sample $B$ states $s_b$ from the replay buffer and
$N_{\mathrm D}$ independent inputs
$z^{\mathrm D}_{b,n}\sim p_{\mathrm{ref}}$ per state.
For each current action
$a^k_{b,n}=f_{\theta_k}(s_b,z^{\mathrm D}_{b,n})$,
we construct the fixed target
\begin{equation}
    \widetilde a^{k+1/2}_{b,n}
    =\sg\!\left[
        a^k_{b,n}
        +\eta\nabla_aQ(s_b,a^k_{b,n})
    \right].
    \label{eq:appendix-drift-target}
\end{equation}
The empirical Drift loss is
\begin{equation}
    \widehat{\mathcal L}_{\mathrm{Drift}}
    (\theta;\theta_k)
    =
    \frac{1}{2BN_{\mathrm D}}
    \sum_{b=1}^{B}\sum_{n=1}^{N_{\mathrm D}}
    \left\|
        f_\theta(s_b,z^{\mathrm D}_{b,n})
        -\widetilde a^{k+1/2}_{b,n}
    \right\|^2.
    \label{eq:method-drift-empirical}
\end{equation}
The actor prediction and its target use the same input.
The stop-gradient operation holds the target fixed,
so only the prediction contributes gradients to the
actor parameters.
One optimizer step from $\theta_k$ gives
$\theta_{k+1/2}$.

\paragraph{Computing Spread with characteristic functions.}
We construct the Spread target from this updated actor.
Write $K_\eta=K_{\lambda,\eta}$ for the spreading-noise
distribution and $m_\eta=\widehat K_{\lambda,\eta}$
for its characteristic function (CF).
By independence, the target CF is
$m_\eta(\omega)
\varphi_{\theta_{k+1/2}}(\omega\mid s)$.
We use this expression to compute the Spread MMD loss
as in \cref{eq:method-cf-population}.

For each replay state, draw $N_{\mathrm{CF}}$ fresh
inputs $z^{\mathrm{CF}}_{b,n}\sim p_{\mathrm{ref}}$.
Draw $M_\omega$ frequencies $\omega_\ell\sim\rho$,
shared across the batch.
For any actor parameters $\theta$, estimate its CF by
\begin{equation}
    \widehat\varphi_\theta(\omega_\ell\mid s_b)
    =
    \frac{1}{N_{\mathrm{CF}}}
    \sum_{n=1}^{N_{\mathrm{CF}}}
    e^{i\omega_\ell^\top
       f_\theta(s_b,z^{\mathrm{CF}}_{b,n})}
    =
    \widehat c^\theta_{b,\ell}
    +i\widehat s^\theta_{b,\ell}.
    \label{eq:method-cf-moments}
\end{equation}
Here, $\widehat c^\theta$ and $\widehat s^\theta$
are the averages of the corresponding cosines and sines.
Using the post-Drift actor, form the fixed target
\begin{equation}
    \widehat\varphi^{\mathrm{tar}}_{b,\ell}
    =
    \sg\!\left[
        m_\eta(\omega_\ell)
        \widehat\varphi_{\theta_{k+1/2}}
        (\omega_\ell\mid s_b)
    \right]
    =
    \widehat c^{\mathrm{tar}}_{b,\ell}
    +i\widehat s^{\mathrm{tar}}_{b,\ell}.
    \label{eq:method-target-moments}
\end{equation}
\AppendixLabelAlias{eq:method-empirical-cf}{eq:method-cf-moments}
The actor and target use the same inputs and frequencies.
Their squared CF difference can be computed with
real-valued operations:
\begin{equation}
    \widehat{\mathcal L}_{\mathrm{Spread}}
    (\theta;\theta_{k+1/2})
    =
    \frac{1}{BM_\omega}
    \sum_{b=1}^{B}\sum_{\ell=1}^{M_\omega}
    \left[
        (\widehat c^\theta_{b,\ell}
         -\widehat c^{\mathrm{tar}}_{b,\ell})^2
        +
        (\widehat s^\theta_{b,\ell}
         -\widehat s^{\mathrm{tar}}_{b,\ell})^2
    \right].
    \label{eq:method-cf-empirical}
\end{equation}
One optimizer step from $\theta_{k+1/2}$ gives
$\theta_{k+1}$.
We compare the CFs separately at each state before
averaging over the batch.
{This is a finite-sample approximation to the population
MMD objective.}

\paragraph{Using the noise characteristic function.}
The known noise CF lets us average over the spreading
noise analytically.
For fixed post-Drift actions
$a_1,\ldots,a_{N_{\mathrm{CF}}}$ and independent
$\epsilon_n\sim K_\eta$,
\begin{equation}
    \E_{\epsilon_1,\ldots,\epsilon_{N_{\mathrm{CF}}}}
    \left[
        \frac{1}{N_{\mathrm{CF}}}
        \sum_{n=1}^{N_{\mathrm{CF}}}
        e^{i\omega^\top(a_n+\epsilon_n)}
    \right]
    =
    m_\eta(\omega)
    \frac{1}{N_{\mathrm{CF}}}
    \sum_{n=1}^{N_{\mathrm{CF}}}
    e^{i\omega^\top a_n}.
    \label{eq:appendix-cf-noise-average}
\end{equation}
We therefore construct the target CF by multiplying
the estimated actor CF by $m_\eta$.
Computing the inner products and sample averages costs
$O(BN_{\mathrm{CF}}M_\omega d)$ operations, in addition
to the actor's forward and backward passes.

\subsubsection{Spreading Noise and Characteristic Functions}
\label{sec:appendix-cf-targets}

We use Gaussian spreading noise by default and compare
three alternatives in
{Figure~\ref{fig:experiment-cf-target-ablation}}.
All four choices use the same actor and loss.
For a post-Drift distribution
$\pi_{k+1/2}(\cdot\mid s)$, the Spread target is
\begin{equation}
    \widetilde\pi_{k+1}
    =K_\eta*\pi_{k+1/2},
    \qquad
    \varphi_{\widetilde\pi_{k+1}}(\omega\mid s)
    =
    m_\eta(\omega)
    \varphi_{\pi_{k+1/2}}(\omega\mid s).
    \label{eq:cf-general-target}
\end{equation}
The noise distribution $K_\eta$ determines the target.
The frequency distribution $\rho$ determines the MMD
kernel used to compare that target with the actor.

\paragraph{Noise distributions and their characteristic functions.}
Write $c=\lambda\eta$.
For the Gaussian and mixture constructions, let
$\xi\sim\mathcal N(0,I_d)$ and
$v_\kappa\sim\operatorname{Gamma}
(\kappa,\text{rate }\kappa)$ be independent,
with fixed $\kappa>0$.
For uniform noise, let the coordinates of $U$ be
independent and uniform on $[-1,1]$.
The resulting noise distributions are listed in
Table~\ref{tab:appendix-cf-target-families}.
Each has mean zero and covariance $2cI_d$.

\begin{table}[!htbp]
    \centering
    \small
    \setlength{\tabcolsep}{6pt}
    \caption{Spreading noise and characteristic functions,
    with $c=\lambda\eta$ and $a_\eta=\sqrt{6c}$.
    The Gamma construction scales a Gaussian vector
    by an independent Gamma-distributed variance factor.}
    \label{tab:appendix-cf-target-families}
    \begin{tabular}{@{}lcc@{}}
        \toprule
        Noise
        & Characteristic function $m_\eta(\omega)$
        & Noise sample $\epsilon_\eta$ \\
        \midrule
        Gaussian
        & $e^{-c\|\omega\|^2}$
        & $\sqrt{2c}\,\xi$ \\
        Resolvent
        & $(1+c\|\omega\|^2)^{-1}$
        & $\sqrt{2cv_1}\,\xi$ \\
        Gamma mixture
        & $(1+c\|\omega\|^2/\kappa)^{-\kappa}$
        & $\sqrt{2cv_\kappa}\,\xi$ \\
        Uniform
        & $\prod_{j=1}^{d}
           \operatorname{sinc}(a_\eta\omega_j)$
        & $a_\eta U$ \\
        \bottomrule
    \end{tabular}
\end{table}

For the Gamma mixture, conditioning on $v_\kappa$ gives
\begin{equation}
    \E e^{i\omega^\top\sqrt{2cv_\kappa}\,\xi}
    =
    \E e^{-cv_\kappa\|\omega\|^2}
    =
    \left(
        1+\frac{c\|\omega\|^2}{\kappa}
    \right)^{-\kappa}.
    \label{eq:cf-gamma-family}
\end{equation}
\AppendixLabelAlias{eq:cf-resolvent-mixture}{eq:cf-gamma-family}
Resolvent is the case $\kappa=1$, for which
$v_1\sim\operatorname{Exp}(1)$.
The Uniform CF follows by multiplying the CFs of
its independent coordinates, where
$\operatorname{sinc}(x)=\sin(x)/x$ and
$\operatorname{sinc}(0)=1$.

\paragraph{Connection to the diffusion update.}
Gaussian noise gives the exact entropy-only diffusion
over duration $\eta$.
The other choices agree with it to first order:
for each fixed $\omega$,
\begin{equation}
    m_\eta(\omega)
    =1-\lambda\eta\|\omega\|^2+o(\eta).
    \label{eq:cf-first-order}
\end{equation}
All four noise distributions can be written as
$\epsilon_\eta=\sqrt{2\lambda\eta}\,\zeta$,
where $\zeta$ satisfies
\cref{eq:appendix-noise-conditions}.
They therefore satisfy the noise assumptions in
\cref{thm:cf-target-convergence} and share the same
diffusion limit.

\subsubsection{Choosing the Frequency Scale}
\label{sec:appendix-frequency}
\label{sec:appendix-frequency-gaussian}

We sample Gaussian frequencies
$\omega=\sigma_\omega Z$,
where $Z\sim\mathcal N(0,I_d)$,
for every spreading-noise choice.
The corresponding MMD kernel is
$h(a,a')
=\exp(-\sigma_\omega^2\|a-a'\|^2/2)$.
Thus, the frequency scale determines the Gaussian
kernel's bandwidth.

For Gaussian spreading noise,
$m_\eta(\omega)=e^{-\lambda\eta\|\omega\|^2}$.
If $\lambda\eta\|\omega\|^2$ is very small,
the target CF is almost unchanged by Spread.
We choose a default scale that accounts for both
the spreading-noise size and the action dimension.
For independent
$\epsilon_{\mathrm G}\sim
\mathcal N(0,2\lambda\eta I_d)$ and $\omega$,
we set
\begin{equation}
    \E[(\omega^\top\epsilon_{\mathrm G})^2]
    =2\lambda\eta d\,\sigma_{\mathrm G}^2=1,
    \qquad
    \sigma_{\mathrm G}
    =(2\lambda\eta d)^{-1/2}.
    \label{eq:frequency-gaussian-scale}
\end{equation}

\subsection{Complete Training Algorithm}
\label{sec:appendix-training}
\label{sec:method-critics}

Algorithm~\ref{alg:rl-loop} combines the actor updates
with off-policy critic learning.
Environment transitions are stored in a replay buffer
$\mathcal D$ and reused for training.
Each gradient iteration updates the critics first,
then performs one Drift step and one Spread step
using the replay states.

\paragraph{Critic update.}
Following \cref{sec:method-practical}, the critics
estimate discounted rewards without an entropy term
in the Bellman target.
Entropy regularization remains in the actor objective.

Let $Q_{\phi_1},Q_{\phi_2}$ be the current critics,
$\bar Q_1,\bar Q_2$ their target networks, and
$f_{\bar\theta}$ the target actor.
For each replay transition
$(s_b,a_b,r_b,s'_b,\delta_b)$,
draw $z'_b\sim p_{\mathrm{ref}}$ and set
$a'_b=f_{\bar\theta}(s'_b,z'_b)$.
Both critics regress toward the fixed target
\begin{equation}
    y_b
    =
    \sg\!\left[
        r_b+\gamma(1-\delta_b)
        \min_{i\in\{1,2\}}
        \bar Q_i(s'_b,a'_b)
    \right].
    \label{eq:appendix-critic-target}
\end{equation}
Using the smaller target value reduces overestimation
\citep{fujimoto2018addressing}.
Each critic minimizes
$B^{-1}\sum_b
(Q_{\phi_i}(s_b,a_b)-y_b)^2$.
Here, $\delta_b=1$ for an environment termination;
time-limit truncations retain bootstrapping when
required by the task.
This update requires no policy-density estimate.

\paragraph{Actor update.}
After updating the critics, we hold their parameters
fixed and use
$Q(s,a)=\min_{i\in\{1,2\}}Q_{\phi_i}(s,a)$
to construct the Drift targets.
One optimizer step on
$\widehat{\mathcal L}_{\mathrm{Drift}}$
gives $\theta_{k+1/2}$.
We then construct the Spread target from this updated
actor and take one step on
$\widehat{\mathcal L}_{\mathrm{Spread}}$
to obtain $\theta_{k+1}$.
The evolution step $\eta$ controls the target
displacement and spreading-noise scale; it is
separate from the optimizer learning rates.

\paragraph{Target-network update.}
{On the target-update schedule, we move each target
parameter toward the corresponding online parameter:}
\begin{equation}
    \bar\theta
    \leftarrow
    (1-\tau_{\mathrm{targ}})\bar\theta
    +\tau_{\mathrm{targ}}\theta_{k+1},
    \qquad
    \bar\phi_i
    \leftarrow
    (1-\tau_{\mathrm{targ}})\bar\phi_i
    +\tau_{\mathrm{targ}}\phi_i.
    \label{eq:appendix-target-update}
\end{equation}
Here, $\bar\phi_i$ denotes the parameters of $\bar Q_i$.

\begin{algorithm}[H]
    \caption{\ours{}: online training}
    \label{alg:rl-loop}
    \begin{algorithmic}[1]
        \Require Actor $f_\theta$,
        critics $Q_{\phi_1},Q_{\phi_2}$,
        and replay buffer $\mathcal D$
        \State Initialize target networks from the actor
        and critics; set $k\gets0$ and observe state $s$.
        \While{training continues}
            \State Sample $z\sim p_{\mathrm{ref}}$
            and take action $a\gets f_{\theta_k}(s,z)$.
            \State Observe $(r,s',\delta)$ and store
            $(s,a,r,s',\delta)$ in $\mathcal D$.
            \For{each scheduled gradient iteration}
                \State Sample a batch of $B$ transitions
                from $\mathcal D$.
                \State Compute the targets in
                \cref{eq:appendix-critic-target}
                and update both critics by
                squared-error regression.
                \State Set
                $Q(s,a)\gets
                \min_{i\in\{1,2\}}Q_{\phi_i}(s,a)$
                and hold the critics fixed.
                \State Draw $N_{\mathrm D}$ actor inputs
                per replay state and construct the
                targets in \cref{eq:appendix-drift-target}.
                \State Take one step on
                $\widehat{\mathcal L}_{\mathrm{Drift}}$
                to obtain $\theta_{k+1/2}$.
                \State Draw $N_{\mathrm{CF}}$ fresh actor
                inputs per state and $M_\omega$ frequencies.
                \State Construct the fixed Spread target
                from $f_{\theta_{k+1/2}}$ using
                \cref{eq:method-target-moments}.
                \State Take one step on
                $\widehat{\mathcal L}_{\mathrm{Spread}}$
                to obtain $\theta_{k+1}$.
                \State Update the target networks
                on schedule; set $k\gets k+1$.
            \EndFor
            \If{the episode has ended}
                \State Reset the environment
                and observe $s$.
            \Else
                \State $s\gets s'$.
            \EndIf
        \EndWhile
    \end{algorithmic}
\end{algorithm}

\section{Boltzmann Convergence}
\label{sec:appendix-theory}
\label{sec:method-supp-guarantees}

We prove the convergence result in
\cref{thm:main-fixed-critic-convergence}.
The argument connects the actor losses to the Boltzmann target
in three steps. Exact fitting makes each Drift--Spread cycle
equivalent to moving an action along the critic gradient and
then adding noise. As the step size decreases, these updates
approach a continuous diffusion. Over time, this diffusion
converges to the Boltzmann target.

\subsection{Setting and Assumptions}
\label{sec:theory-setting}
\label{sec:appendix-background-maxent}

Fix a state $s$, critic $Q$, and temperature $\lambda>0$.
We work on $\mathbb R^d$ and omit state conditioning from
the policy notation. Write $b(a)=\nabla_a Q(s,a)$.
The target is the Boltzmann distribution
$\pi_Q^*(a)=Z_Q(s)^{-1}\exp(Q(s,a)/\lambda)$, where
$Z_Q(s)=\int_{\mathbb R^d}\exp(Q(s,a)/\lambda)\,da$.

\begin{assumption}[Conditions on the critic]
\label{ass:theory-flow}
The target is normalizable, $0<Z_Q(s)<\infty$.
The critic $Q(s,\cdot)$ is twice continuously differentiable,
and there is a finite constant $C\ge0$ such that
\begin{equation}
    \langle a,b(a)\rangle\le C(1+\|a\|^2)
    \qquad\text{for all }a\in\mathbb R^d.
    \label{eq:theory-radial}
\end{equation}
\end{assumption}
The smoothness condition makes $b$ locally Lipschitz.
The bound in \cref{eq:theory-radial} controls how strongly
the drift pushes actions away from the origin and prevents
the continuous process from escaping to infinity in finite time.

For step size $\eta>0$, the Spread noise is
$\epsilon_{\eta,k}=\sqrt{2\lambda\eta}\,\zeta_k$.
The variables $\zeta_k$ are independent copies of a random
vector $\zeta$ whose distribution does not depend on $\eta$,
with
\begin{equation}
    \E\zeta=0,\qquad
    \E[\zeta\zeta^\top]=I_d,\qquad
    \E\|\zeta\|^3<\infty.
    \label{eq:theory-noise-moments}
\end{equation}
Each noise draw is independent of the current action.
These conditions include Gaussian noise and the other
noise choices in Appendix~\ref{sec:appendix-cf-targets}.
They allow different finite-step updates while giving
the same small-step limit.

\subsection{Exact Fitting Recovers the Action Update}
\label{sec:theory-fitting}

We first show what minimizing the two actor losses achieves.
Let $\pi_k^{(\eta)}$, $\pi_{k+1/2}^{(\eta)}$, and
$\pi_{k+1}^{(\eta)}$ denote the actor distributions before
Drift, after Drift, and after Spread, respectively.
Each target is held fixed during its update, and the
Spread target is constructed from the post-Drift actor.

\begin{proposition}[Action update under exact fitting]
\label{prop:theory-exact-fitting}
Suppose that, at the fixed state $s$, the population
Drift regression loss and Spread MMD loss are zero
after their respective updates in every cycle.
{If the MMD kernel $h$ is characteristic, meaning that zero MMD implies identical distributions,}
then $\pi_k^{(\eta)}$ is the distribution
of the random action sequence
\begin{equation}
    A_{k+1}^{(\eta)}
    =A_k^{(\eta)}+\eta b(A_k^{(\eta)})
     +\sqrt{2\lambda\eta}\,\zeta_k,
    \qquad A_0^{(\eta)}\sim\pi_0.
    \label{eq:theory-action-update}
\end{equation}
\end{proposition}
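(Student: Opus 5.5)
We argue by induction on $k$: the fitted policy $\pi_k^{(\eta)}$ equals the law of $A_k^{(\eta)}$ defined by \cref{eq:theory-action-update}. The base case $k=0$ holds by definition, since both start from $\pi_0$. For the inductive step we assume $\pi_k^{(\eta)}=\mathrm{Law}(A_k^{(\eta)})$ and treat the two half-steps of the cycle separately.

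\emph{Drift half-step.} Because the population Drift loss at $s$ is zero after the update,
\[
\E_z\bigl\|f_{\theta_{k+1/2}}(s,z)-\widetilde a_{k+1/2}(s,z)\bigr\|^2=0.
\]
Hence $f_{\theta_{k+1/2}}(s,z)=f_{\theta_k}(s,z)+\eta b(f_{\theta_k}(s,z))$ for $p_{\mathrm{ref}}$-almost every $z$. By the inductive hypothesis, $f_{\theta_k}(s,z)$ with $z\sim p_{\mathrm{ref}}$ has law $\mathrm{Law}(A_k^{(\eta)})$. It follows that $\pi_{k+1/2}^{(\eta)}$ is the pushforward of $\mathrm{Law}(A_k^{(\eta)})$ under the measurable (indeed continuous, since $Q(s,\cdot)\in C^2$) map $a\mapsto a+\eta b(a)$. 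In other words, $\pi_{k+1/2}^{(\eta)}=\mathrm{Law}(A_k^{(\eta)}+\eta b(A_k^{(\eta)}))$. Only equality in distribution is needed here, because the next stage depends on the actor only through its output law.

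\emph{Spread half-step.} The Spread target $\widetilde\pi_{k+1}$ is the law of $\bar a+\epsilon_\eta$, where $\bar a\sim\pi_{k+1/2}^{(\eta)}$ and $\epsilon_\eta=\sqrt{2\lambda\eta}\,\zeta_k$ is independent of $\bar a$. Equivalently, $\widetilde\pi_{k+1}=K_{\lambda,\eta}*\pi_{k+1/2}^{(\eta)}$, as in \cref{eq:cf-general-target}. Zero population MMD under a characteristic kernel $h$ gives $\pi_{k+1}^{(\eta)}=\widetilde\pi_{k+1}$.

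To make this airtight, note that for the kernel $h(x,y)=\E_{\omega\sim\rho}[\cos(\omega^\top(x-y))]$ the loss is $\E_\rho|\varphi_\theta-\widetilde\varphi_{k+1}|^2$. If $\rho$ has full support (e.g.\ Gaussian), a zero loss forces the two characteristic functions to agree $\rho$-a.e. Since characteristic functions are continuous, they then agree everywhere, and Lévy's uniqueness theorem identifies the distributions. In general we simply invoke the characteristic assumption as stated.

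Combining the two half-steps, $\pi_{k+1}^{(\eta)}$ is the law of $A_k^{(\eta)}+\eta b(A_k^{(\eta)})+\sqrt{2\lambda\eta}\,\zeta_k$, with $\zeta_k$ independent of $A_k^{(\eta)}$. This is exactly $\mathrm{Law}(A_{k+1}^{(\eta)})$. We also need the recursion \cref{eq:theory-action-update} to be realized with $\zeta_k$ independent of $(A_0,\zeta_0,\dots,\zeta_{k-1})$. The law of $A_{k+1}$ depends only on the joint law of $(A_k,\zeta_k)$, which is a product measure, so we may construct the whole sequence on one probability space with i.i.d.\ $\zeta_k$ independent of $A_0$.

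\emph{Main subtlety.} The fitting identities only give equality of one-dimensional marginals at each cycle, not a pathwise coupling. The Spread step in particular destroys the $z$-pairing. The step to state carefully is therefore that $\pi_k^{(\eta)}$ matches the \emph{marginal} law of the Markov chain. This suffices because each transition kernel (drift map followed by convolution with $K_{\lambda,\eta}$) acts on distributions, so marginal laws propagate deterministically from cycle to cycle.
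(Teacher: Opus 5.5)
Your proposal is correct and matches the paper's proof: zero Drift loss gives $z$-a.e.\ equality, hence the push-forward of the current policy under $a\mapsto a+\eta b(a)$, and zero Spread MMD with a characteristic kernel identifies the fitted policy with the noise-convolved post-Drift law, repeated cycle by cycle. Your additions are sound refinements within the same route: the explicit induction, the characteristic-function/L\'evy argument for full-support $\rho$ (in place of the paper's appeal to injectivity of the kernel mean embedding), and the remark that only marginal laws propagate.
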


\begin{proof}
Zero Drift regression loss means that, for almost every
generator input $z$, the updated actor reproduces its target:
\[
    f_{\theta_{k+1/2}}(s,z)
    =f_{\theta_k}(s,z)
     +\eta b\bigl(f_{\theta_k}(s,z)\bigr).
\]
Thus, if $A_k^{(\eta)}$ follows the current policy,
the post-Drift policy is the distribution of
$A_k^{(\eta)}+\eta b(A_k^{(\eta)})$.

Spread then targets the distribution obtained by adding
independent noise to these post-Drift actions.
{Denote this distribution by $\widetilde\pi_{k+1}$.
By the population MMD expression in
\cref{eq:method-mmd-population}, zero Spread loss gives
\[
    \operatorname{MMD}_h^2
    \bigl(\pi_{k+1}^{(\eta)},\widetilde\pi_{k+1}\bigr)=0.
\]
Since $h$ is characteristic, its kernel mean embedding
is injective \citep{gretton2012kernel}, so
$\pi_{k+1}^{(\eta)}=\widetilde\pi_{k+1}$.}
Combining the two updates proves
\cref{eq:theory-action-update}, and repeating this argument
establishes the result for every cycle.
\end{proof}

\subsection{Convergence to the Boltzmann Target}
\label{sec:theory-fixed-critic}
\label{sec:theory-cf-target-convergence}

Each cycle in \cref{eq:theory-action-update} advances
the evolution time by $\eta$.
To compare different step sizes at the same time $\tau$,
we take $k=\lfloor\tau/\eta\rfloor$ and use the same
initial policy $\pi_0$ for every $\eta$.
The limiting continuous process is
\begin{equation}
    dA_\tau=b(A_\tau)\,d\tau
        +\sqrt{2\lambda}\,dW_\tau,
    \qquad A_0\sim\pi_0,
    \label{eq:theory-sde}
\end{equation}
where $W_\tau$ is standard $d$-dimensional Brownian motion,
independent of $A_0$.
This is the Langevin diffusion whose density follows
the MaxEnt evolution in \cref{eq:method-fokker-planck}.
Let $p_\tau$ denote its distribution at time $\tau$.

\begin{theorem}[Boltzmann convergence under exact fitting]
\label{thm:cf-target-convergence}
Under \cref{ass:theory-flow}, the noise conditions above,
and the exact-fitting conditions in
\cref{prop:theory-exact-fitting}, for every initial
policy $\pi_0$ and every fixed finite time $\tau$,
\begin{equation}
    \pi_{\lfloor\tau/\eta\rfloor}^{(\eta)}
    \Rightarrow p_\tau
    \qquad\text{as }\eta\downarrow0.
    \label{eq:cf-target-finite-time}
\end{equation}
Moreover, $p_\tau\Rightarrow\pi_Q^*$ as $\tau\to\infty$.
Consequently,
\begin{equation}
    \lim_{\tau\to\infty}\lim_{\eta\downarrow0}
    \pi_{\lfloor\tau/\eta\rfloor}^{(\eta)}
    =\pi_Q^*
    \qquad\text{weakly}.
    \label{eq:cf-target-boltzmann-limit}
\end{equation}
Here, $\Rightarrow$ denotes weak convergence:
expectations of every bounded continuous function converge.
\end{theorem}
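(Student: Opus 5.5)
By \cref{prop:theory-exact-fitting}, exact fitting reduces the statement to one about the explicit Markov chain \cref{eq:theory-action-update}. This chain is an Euler--Maruyama-type scheme for the Langevin SDE \cref{eq:theory-sde}, except that the Gaussian increments are replaced by the standardized noise $\zeta_k$. The proof therefore splits into two independent limits.

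\textbf{Finite-time limit: existence of the diffusion.} The drift $b$ is locally Lipschitz because $Q(s,\cdot)\in C^2$. The one-sided growth bound \cref{eq:theory-radial} gives nonexplosion: applying It\^o's formula to $V(a)=1+\|a\|^2$ yields $\mathcal L V\le 2C V+2\lambda d$, where $\mathcal L g=\langle b,\nabla g\rangle+\lambda\Delta g$. Hence \cref{eq:theory-sde} has a unique global strong solution, and its martingale problem for $\mathcal L$ on $C_c^\infty$ is well posed.

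\textbf{Finite-time limit: convergence of the chain.} I would prove $\pi^{(\eta)}_{\lfloor\tau/\eta\rfloor}\Rightarrow p_\tau$ with a Stroock--Varadhan / Ethier--Kurtz diffusion-approximation theorem for Markov chains. Write $\Delta=A_{k+1}-A_k=\eta b(A_k)+\sqrt{2\lambda\eta}\,\zeta_k$ and check three conditions, each locally uniformly in $a$:
\begin{itemize}
\item the drift condition $\eta^{-1}\E[\Delta\mid A_k=a]=b(a)$;
\item the covariance condition $\eta^{-1}\mathrm{Cov}[\Delta\mid A_k=a]=2\lambda I_d$;
\item a Lindeberg-type condition, $\eta^{-1}\E[\|\Delta\|^3\mid A_k=a]=O(\eta^{1/2})$ on compacts, which uses $\E\|\zeta\|^3<\infty$.
\end{itemize}
Equivalently, for $g\in C_c^\infty$, the third-order Taylor expansion already sketched in \cref{eq:appendix-spread-expansion} gives $\eta^{-1}(\E[g(A_{k+1})\mid A_k=a]-g(a))\to\mathcal L g(a)$ uniformly on $\mathbb R^d$. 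Uniformity holds because $g$ has compact support and $b$ is continuous; the $\zeta$ tail is controlled by the third moment. These generator convergences, together with well-posedness of the martingale problem, give convergence of the interpolated processes on path space, and hence convergence of the time-$\tau$ marginals.

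Tightness is handled by localization: stop the chain on exiting a ball $B_R$ and use nonexplosion of the limit. Alternatively, one can show $\E V(A_k)\le e^{c k\eta}(\E V(A_0)+c')$ directly from \cref{eq:theory-radial}. This second route needs care, because $\|b(a)\|^2\eta^2$ is not controlled by $V$; for that reason I would prefer the localization argument, which needs no global moment bound on the chain.

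\textbf{Long-time limit.} I would show $p_\tau\Rightarrow\pi_Q^*$ by verifying the conditions below.
\begin{itemize}
\item $\pi_Q^*$ is invariant: it solves the stationary Fokker--Planck equation, since $b\pi_Q^*=\lambda\nabla\pi_Q^*$, so the probability flux vanishes.
\item The diffusion is nondegenerate, with constant diffusion coefficient and locally Lipschitz drift, so its transition kernels have positive continuous densities. The process is therefore strong Feller and irreducible.
\item Under these properties an invariant probability measure is unique. By the Doob--Khasminskii theorem, for every initial law, $P_\tau(x,\cdot)\to\pi_Q^*$ in total variation for each $x$. Dominated convergence over $\pi_0$ then gives $p_\tau\to\pi_Q^*$ in total variation, hence weakly.
\end{itemize}
The main obstacle is that this step must not require positive recurrence to be proved separately. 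Here recurrence follows from the existence of an invariant probability measure (normalizability $0<Z_Q<\infty$) together with irreducibility and the strong Feller property. I would cite the standard result that, for a nonexplosive, irreducible, strong Feller diffusion, an invariant probability implies convergence of the transition probabilities. I would double-check that nonexplosion from \cref{eq:theory-radial} is all that result needs.

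Combining the two limits in the stated order gives \cref{eq:cf-target-boltzmann-limit}.
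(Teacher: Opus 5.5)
Your plan is sound and reaches both limits, but by a different route from the paper in each half.

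\textbf{Finite-time limit.} The paper never uses martingale problems. It forms the noise path $W^{(\eta)}_{k\eta}=\sqrt{\eta}\sum_{j<k}\zeta_j$, and uses Donsker's theorem with Skorokhod representation to make it converge uniformly almost surely to a Brownian path. It then exploits the additive noise. With $b$ replaced by a globally Lipschitz $b_R$, the Euler recursion and the integral equation driven by the same coupled paths are compared deterministically via Gr\"onwall. Choosing the random radius $R$ above the maximum of the limiting path on $[0,H]$ then removes the modification. That argument is elementary, gives almost-sure uniform path convergence, and needs only $\E\|\zeta\|^2<\infty$. Your Stroock--Varadhan/Ethier--Kurtz route needs well-posedness of the martingale problem and a localized diffusion-approximation theorem. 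In exchange it is more general: it would also handle state-dependent noise or non-Euler chains with matching local moments.

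\textbf{Long-time limit.} The paper cites Roberts and Tweedie (1996, Theorem~2.1). For Langevin diffusions with $C^1$ $\nabla\log\pi_Q^*$ and the growth bound inherited from \cref{eq:theory-radial}, that result packages nonexplosion, irreducibility, the strong Feller property and total-variation convergence from every starting point. Your Doob--Khasminskii argument is the general mechanism behind it. However, it leaves you to show that $\pi_Q^*$ is invariant for the transition semigroup. Vanishing flux only gives $\int\mathcal L g\,d\pi_Q^*=0$ for $g\in C_c^\infty$. Upgrading this to semigroup invariance uses the well-posedness you established, e.g.\ via Echeverr\'{\i}a's theorem.

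\textbf{A claim to correct.} The generator convergence is \emph{not} uniform on $\mathbb R^d$. Compact support of $g$ does not rescue it, because \cref{eq:theory-radial} allows superlinear inward drift. Take $d=1$ and $Q(a)=-a^4/4$, so $b(a)=-a^3$, which satisfies every assumption. At $a=\eta^{-1/2}$ the Euler step lands exactly at $a+\eta b(a)=0$. For $g\in C_c^\infty$ with $g(0)=1$, this gives $\E[g(A_{k+1})\mid A_k=a]\to 1$ while $g(a)=\mathcal L g(a)=0$, an error of order $\eta^{-1}$.

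So a Trotter--Kato-type semigroup convergence theorem, which needs uniform convergence on the whole space, is not available. You must rely only on the locally uniform conditions in your three bullets. Then invoke the localized Markov-chain diffusion-approximation result (e.g.\ Ethier--Kurtz, Chapter~7, Section~4), whose hypotheses are exactly uniformity on balls plus well-posedness of the limiting martingale problem. This is the same phenomenon behind your correct remark that global moment bounds for the chain are unavailable, and it is why the paper also localizes.
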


\begin{proof}
\emph{Small-step limit.}
First fix an initial action $A_0=x$ and a finite time
interval $[0,H]$.
The drift is locally Lipschitz, so the diffusion has
a unique solution until a possible escape to infinity.
Applying It\^o's formula to $1+\|A_\tau\|^2$,
with the process stopped when it leaves a ball,
and using \cref{eq:theory-radial} rules out such
an escape in finite time.
Thus the solution exists throughout $[0,H]$.

The accumulated noise in the discrete updates is
\[
    W_{k\eta}^{(\eta)}
    =\sqrt{\eta}\sum_{j=0}^{k-1}\zeta_j.
\]
Interpolate these values linearly between grid times.
The functional central limit theorem implies that
$W^{(\eta)}$ converges in distribution to Brownian motion
on $[0,H]$.
Along any sequence of step sizes tending to zero,
the Skorokhod representation theorem allows these paths
to be constructed on a common probability space so that
they converge uniformly almost surely
\citep{billingsley1999convergence}.

We next show that this convergence of the noise paths
also gives convergence of the action paths.
For each positive integer $R$, smoothly modify $b$
outside the ball of radius $R$ so that the modified
drift $b_R$ is bounded and globally Lipschitz.
Use the same noise paths to form both the discrete
updates and the diffusion with this modified drift.
Within each time step, the change in the interpolated
action is bounded by the drift increment and the noise
increment, both of which tend uniformly to zero.
Subtracting the two path equations and applying
Gr\"onwall's inequality therefore shows that the
interpolated discrete action paths converge uniformly
to the diffusion paths with drift $b_R$.

The original diffusion stays within a finite ball on
$[0,H]$ almost surely.
Choose an integer $R$ larger than its maximum norm
by at least one.
For this $R$, modifying the drift has no effect on
the diffusion path.
Uniform convergence implies that the corresponding
discrete paths also remain inside the ball of radius
$R$ for all sufficiently small step sizes.
They therefore follow the original update as well.
This proves convergence of the original action paths.
By continuity of $A_\tau$, it also gives
$A_{\lfloor\tau/\eta\rfloor}^{(\eta)}\Rightarrow A_\tau$
for every fixed $\tau\le H$.

The argument holds for every initial action $x$.
Integrating bounded continuous test functions over
$x\sim\pi_0$ and applying bounded convergence extends
the result to any initial policy.
\Cref{prop:theory-exact-fitting} then gives
\cref{eq:cf-target-finite-time} for the fitted actors.

\emph{Long-time limit.}
The Boltzmann density satisfies
\[
    \lambda\nabla_a\pi_Q^*(a)
    =\pi_Q^*(a)b(a).
\]
Thus the drift and diffusion terms balance at
$\pi_Q^*$, making it a stationary distribution
of \cref{eq:theory-sde}.
To establish convergence to this distribution,
we apply \citet[Theorem~2.1]{roberts1996exponential}.
Its conditions hold because $\pi_Q^*$ is positive,
$\nabla_a\log\pi_Q^*=b/\lambda$ is continuously
differentiable, and
\[
    \langle a,\nabla_a\log\pi_Q^*(a)\rangle
    \le\frac{C}{\lambda}(1+\|a\|^2).
\]
With time rescaled by $2\lambda$, their diffusion
is exactly \cref{eq:theory-sde}.
The theorem gives convergence to $\pi_Q^*$ from
every initial action.
Averaging over $\pi_0$ gives $p_\tau\Rightarrow\pi_Q^*$.
Combining this result with
\cref{eq:cf-target-finite-time} proves
\cref{eq:cf-target-boltzmann-limit} and hence
\cref{thm:main-fixed-critic-convergence}.
\end{proof}

\FloatBarrier
\section{Sequential versus Joint Fitting}
\label{sec:appendix-design-analysis}

Sequential fitting first learns the Drift target and then
constructs the Spread target from the updated actor.
A joint alternative constructs both targets from the actor
before Drift and minimizes
$\mathcal L_{\mathrm{Drift}}
+\beta\mathcal L_{\mathrm{Spread}}^{\mathrm{old}}$.
Here, $\beta>0$ weights the Spread MMD loss, and
``old'' indicates that its target uses the pre-Drift actor.

{An action can have multiple dimensions.
A value of $\beta$ that balances Drift and Spread
in one dimension may place too much or too little
weight on Spread in other dimensions.
Because $\beta$ scales the entire Spread loss, adjusting it
cannot always balance all dimensions simultaneously.
Using a separate weight for each dimension would also
make hyperparameter tuning much harder, as action spaces
can have dozens or even hundreds of dimensions.
The limitation of a single $\beta$ can arise even
for simple action distributions.
We construct a two-dimensional Gaussian example in which
no choice of the scalar weight $\beta$ can balance Drift and
Spread in both dimensions at the Boltzmann target
(see \cref{prop:theory-joint-counterexample}).
Appendix~\ref{sec:experiment-joint-beta} compares Sequential
and Joint in online policy learning.}

\subsection{A Two-dimensional Gaussian Example}
\label{sec:theory-joint-sequential}
\label{sec:theory-joint-counterexample}

{We use a target that the actor can represent exactly
to isolate the effect of scalar joint weighting.}
Fix one state, omit state conditioning, and set
$\lambda=1$ and $\kappa=(16,1/4)^\top$.
The critic and its Boltzmann target are
\begin{equation}
    Q(a)=-\frac12\sum_{i=1}^2\kappa_i a_i^2,
    \qquad
    \pi_Q^*=\mathcal N\!\left(
        0,\operatorname{diag}(1/16,4)
    \right).
    \label{eq:toy-critic}
\end{equation}
The target is narrow in the first action coordinate
and wide in the second.
Our actor is $f_\sigma(z)=\operatorname{diag}(\sigma_1,\sigma_2)z$,
where $z\sim\mathcal N(0,I_2)$ and $\sigma_i>0$.
Its distribution is denoted by $\pi_\sigma$, and the
target is represented by $\sigma^*=(1/4,2)^\top$.

\paragraph{The two fitting procedures.}
Take $0<\eta<1/16$ and Gaussian Spread noise
$\mathcal N(0,2\eta I_2)$.
For an actor with standard deviations $\sigma$,
Drift multiplies coordinate $i$ by $1-\eta\kappa_i$,
while adding the Spread noise increases its variance
by $2\eta$.
If both targets are constructed from this actor,
their standard deviations are
\begin{equation}
    \sigma_{\mathrm D,i}=(1-\eta\kappa_i)\sigma_i,
    \qquad
    \sigma_{\mathrm S,i}^{\mathrm{old}}
    =\sqrt{\sigma_i^2+2\eta}.
    \label{eq:toy-targets}
\end{equation}

Sequential first fits $\sigma_{\mathrm D}$ and then
fits the distribution obtained by adding noise to
that updated actor.
Joint instead fits both targets in \cref{eq:toy-targets}
at once.
For candidate standard deviations $\sigma'$, its loss is
\begin{equation}
    \mathcal J_{\eta,\beta}(\sigma';\sigma)
    =\frac12\|\sigma'-\sigma_{\mathrm D}\|^2
     +\beta\mathcal M\bigl(
        \sigma',\sigma_{\mathrm S}^{\mathrm{old}}
     \bigr),
    \label{eq:toy-joint-loss}
\end{equation}
where $\mathcal M(v,w)=\operatorname{MMD}_h^2(\pi_v,\pi_w)$
uses the Gaussian kernel
$h(a,a')=\exp(-\|a-a'\|^2/2)$.
The first term is the Drift regression loss for this actor.
The reference $\sigma$, and hence both targets,
are held fixed when optimizing $\sigma'$.

If repeated exact joint fitting leaves an actor unchanged,
that actor must satisfy
\begin{equation}
    \left.
    \nabla_{\sigma'}\mathcal J_{\eta,\beta}(\sigma';\sigma)
    \right|_{\sigma'=\sigma}=0.
    \label{eq:toy-joint-fixed-point-condition}
\end{equation}
We call a positive solution of this equation a stationary
point of Joint.
The following result shows that such points remain
different from the target, even when the weight is
retuned as the step size decreases.

\begin{proposition}[A persistent mismatch under joint fitting]
\label{prop:theory-joint-counterexample}
For any positive weights $\beta_\eta$ and corresponding
positive stationary points $\sigma_\eta$ of Joint,
\begin{equation}
    \liminf_{\eta\downarrow0}
    D_{\mathrm{KL}}(\pi_{\sigma_\eta}\|\pi_Q^*)
    \ge c_{\mathrm{stat}}\approx0.50735.
    \label{eq:theory-joint-stationary-gap}
\end{equation}
In contrast, repeated exact Sequential fitting converges
to a Gaussian distribution with variances
\begin{equation}
    \sigma_{i,\mathrm{seq}}^2
    =\frac{1}{\kappa_i(1-\kappa_i\eta/2)}
    \longrightarrow\frac{1}{\kappa_i}
    \qquad\text{as }\eta\downarrow0.
    \label{eq:theory-joint-example-stationary}
\end{equation}
Its stationary KL divergence therefore tends to zero.
\end{proposition}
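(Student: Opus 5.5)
The plan is to treat the two halves separately. Both reduce to explicit one-dimensional computations because every distribution involved is a centered Gaussian with diagonal covariance.

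\emph{Sequential.} Under exact fitting, \cref{prop:theory-exact-fitting} shows that each cycle maps a coordinate with variance $v_i$ to $(1-\eta\kappa_i)^2v_i+2\eta$. Since $0<\eta<1/16$, we have $|1-\eta\kappa_i|<1$, so this affine map is a contraction. Its unique fixed point is
\[
    v_i=\frac{2\eta}{1-(1-\eta\kappa_i)^2}=\frac{1}{\kappa_i(1-\kappa_i\eta/2)},
\]
which is \cref{eq:theory-joint-example-stationary}. The KL divergence between product Gaussians is continuous in the variances, so it tends to zero as $\eta\downarrow0$.

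\emph{Joint: reduction to an $\eta$-free constraint.} For centered diagonal Gaussians with kernel $h(a,a')=\exp(-\|a-a'\|^2/2)$, I will use the closed form
\[
    \mathcal M(v,w)=g(v,v)+g(w,w)-2g(v,w),\qquad g(v,w)=\prod_j(1+v_j^2+w_j^2)^{-1/2}.
\]
Differentiate \cref{eq:toy-joint-loss} in $\sigma'_i$ at $\sigma'=\sigma$, with $w_i^2=\sigma_i^2+2\eta$, and divide by $\sigma_i>0$. Condition \cref{eq:toy-joint-fixed-point-condition} becomes
\[
    \eta\kappa_i=2\beta\Bigl[\frac{g(\sigma,\sigma)}{1+2\sigma_i^2}-\frac{g(\sigma,w)}{1+2\sigma_i^2+2\eta}\Bigr],\qquad i=1,2.
\]
Set $u_i=(1+2\sigma_i^2)^{-1}\in(0,1)$ and $S=u_1+u_2$. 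Expanding $g(\sigma,w)=g(\sigma,\sigma)\prod_j(1+2\eta u_j)^{-1/2}$ to first order in $\eta$ gives a bracket equal to
\[
    \eta\, g(\sigma,\sigma)\bigl(2u_i^2+Su_i\bigr)+R_i,\qquad |R_i|\le C\eta^2,
\]
with $C$ uniform over $u\in(0,1]^2$, since all derivatives in $\eta$ are bounded there.

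Taking the ratio of the two equations eliminates both $\beta_\eta$ and $g$, which is why retuning the weight cannot help. It yields
\[
    64=\frac{\kappa_1}{\kappa_2}=\frac{u_1(3u_1+u_2)+O(\eta)/u_2}{u_2(3u_2+u_1)+O(\eta)/u_2}.
\]
The error terms need care when $u_2$ is small; this is handled in the compactness step.

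\emph{Limit and minimization.} Take a sequence $\eta_n\downarrow0$ achieving the $\liminf$, and assume the KL divergence stays bounded along it (otherwise there is nothing to prove). Then each $\sigma_{i,\eta_n}$ stays in a compact subset of $(0,\infty)$, because the KL divergence blows up as any $\sigma_i\to0$ or $\sigma_i\to\infty$. Hence $u_{\eta_n}$ stays in a compact subset of $(0,1)^2$, the remainders are uniformly $O(\eta)$, and along a subsequence $u_{\eta_n}\to\bar u$ with
\[
    u_1(3u_1+u_2)=64\,u_2(3u_2+u_1).
\]
By continuity of the KL divergence, the $\liminf$ is at least
\[
    c_{\mathrm{stat}}:=\min\Bigl\{D_{\mathrm{KL}}(\pi_\sigma\|\pi_Q^*):u_1(3u_1+u_2)=64\,u_2(3u_2+u_1)\Bigr\},
\]
where
\[
    D_{\mathrm{KL}}(\pi_\sigma\|\pi_Q^*)=\tfrac12\sum_i\bigl(\kappa_i\sigma_i^2-1-\log(\kappa_i\sigma_i^2)\bigr).
\]
The target itself is excluded from this curve: it has $u^*=(8/9,1/9)$, for which the ratio equals $200/11\neq64$. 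So the minimum is positive.

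Because the constraint is homogeneous of degree $2$ in $u$, it reads $u_1=t\,u_2$ with $t$ the positive root of $3t^2+t=64(3+t)$, that is, $3t^2-63t-192=0$. This reduces the minimization to one variable: minimize the KL divergence over $u_2\in(0,1/t)$. I expect the main obstacle to be evaluating this minimum to the stated constant $\approx0.50735$. It is elementary but must be done numerically, by solving the scalar first-order condition. A secondary delicate point is making the $O(\eta)$ remainder uniform, which the compactness argument above is meant to handle.
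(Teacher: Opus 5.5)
Your proposal is correct and follows essentially the same route as the paper. Both use the closed-form Gaussian MMD and the per-coordinate stationarity condition, take the ratio of the first-order expansions to eliminate $\beta_\eta$ and force $r_2/r_1=t$ (your $u_1/u_2=t$), and minimize the KL divergence along that line after a compactness step based on KL blow-up. The step you flag as the main obstacle is simpler than you expect: the first-order condition along the constraint is a quadratic, and the paper notes the objective is strictly convex with explicit minimizer $x_1=(\sqrt{457}-17)/(64+t)$, so no numerical root-finding is needed.
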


\begin{proof}
\emph{Sequential fitting.}
Let $x_{i,k}$ denote the variance of coordinate $i$
before cycle $k$.
Exact Drift fitting contracts this variance, and
the subsequent Spread fitting adds $2\eta$:
\[
    x_{i,k+1}=(1-\eta\kappa_i)^2x_{i,k}+2\eta.
\]
Because $0<1-\eta\kappa_i<1$, this recursion converges
to the fixed variance in
\cref{eq:theory-joint-example-stationary}.
These variances approach the Boltzmann target variances
$1/\kappa_i$ as $\eta\downarrow0$.

\emph{Joint fitting.}
For the Gaussian actor and kernel used here, the
MMD loss has the closed form
\begin{equation}
\begin{aligned}
    \mathcal M(v,w)
    &=\prod_{i=1}^2(1+2v_i^2)^{-1/2}
      +\prod_{i=1}^2(1+2w_i^2)^{-1/2} \\
    &\quad
      -2\prod_{i=1}^2(1+v_i^2+w_i^2)^{-1/2}.
\end{aligned}
    \label{eq:theory-joint-example-cf}
\end{equation}
To write its gradient compactly, set
$x_i=\sigma_i^2$, $r_i=1+2x_i$,
$G_0=(r_1r_2)^{-1/2}$, and
$G_\eta=((r_1+2\eta)(r_2+2\eta))^{-1/2}$.
Differentiate \cref{eq:toy-joint-loss} with respect to
the candidate $\sigma'$, keeping the reference $\sigma$
fixed, and then set $\sigma'=\sigma$.
The stationary-point condition becomes
\begin{equation}
    \eta\kappa_i
    =2\beta_\eta\left(
        \frac{G_0}{r_i}
        -\frac{G_\eta}{r_i+2\eta}
    \right),
    \qquad i\in\{1,2\}.
    \label{eq:theory-joint-stationarity}
\end{equation}
The same weight $\beta_\eta$ must satisfy this equation
for both coordinates.
For variances bounded above and bounded away from zero,
the expression in parentheses expands uniformly as
\[
    \frac{G_0}{r_i}-\frac{G_\eta}{r_i+2\eta}
    =\eta G_0\left(
        \frac{3}{r_i^2}+\frac{1}{r_1r_2}
    \right)+O(\eta^2).
\]
Dividing the two equations in
\cref{eq:theory-joint-stationarity} eliminates the weight
before taking the small-step limit.
Any finite positive limiting variances must therefore satisfy
\begin{equation}
    64=\frac{r_2(3r_2+r_1)}{r_1(3r_1+r_2)},
    \qquad
    \frac{r_2}{r_1}
    =t:=\frac{21+\sqrt{697}}{2}\approx23.7004.
    \label{eq:theory-joint-stationary-ratio}
\end{equation}
At the Boltzmann target, however,
$r_1=1+2/16$ and $r_2=1+2\cdot4$, giving
$r_2/r_1=8$.
Thus the balance required by the joint loss is
incompatible with the target variances.

\emph{The KL lower bound.}
For this Gaussian family,
\[
    D_{\mathrm{KL}}(\pi_\sigma\|\pi_Q^*)
    =\frac12\sum_{i=1}^2
      \bigl(\kappa_i x_i-1-\log(\kappa_i x_i)\bigr).
\]
This expression diverges if either variance tends
to zero or infinity.
If the KL lower limit is finite, we can therefore
choose a subsequence attaining that limit whose
variances converge to finite positive values.
Those values must satisfy
\cref{eq:theory-joint-stationary-ratio}.
Writing $x=x_1$ and $y=x_2=tx+(t-1)/2$, we obtain
\begin{equation}
    c_{\mathrm{stat}}
    =\min_{x>0}\frac12\left[
        16x-1-\log(16x)
        +\frac{y}{4}-1-\log\left(\frac{y}{4}\right)
    \right]
    \approx0.50735.
    \label{eq:theory-joint-stationary-minimum}
\end{equation}
The function is strictly convex and attains its minimum
at $x=(\sqrt{457}-17)/(64+t)$.
If the KL lower limit is infinite, the bound holds
immediately.
This proves \cref{eq:theory-joint-stationary-gap}.
\end{proof}

\subsection{Numerical Comparison and Weight Selection}
\label{sec:appendix-sequential-joint-toy}
\label{sec:toy-setup}

\begingroup
\newcommand{\sjWeightFigure}{%
  \leavevmode
  \pdfximage{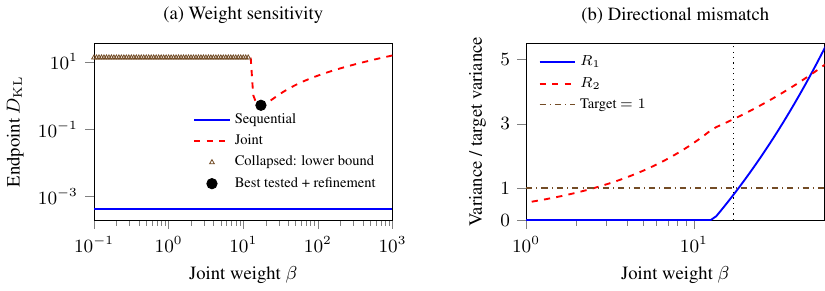}%
  \setbox0=\hbox{\pdfrefximage\pdflastximage}%
  \ht0=139.9264pt\dp0=0pt\wd0=\linewidth
  \box0%
}

\paragraph{Setup.}
We test the Gaussian example using the same initial
actor, $\sigma_0=(0.7,0.5)^\top$, for both methods.
We use $\eta=0.005$ and run 8,000 cycles, corresponding
to evolution time $\tau=40$.
Sequential fits each target exactly, while Joint
numerically minimizes \cref{eq:toy-joint-loss} in each cycle
using damped Newton steps with analytic derivatives.
All action and kernel expectations are evaluated
analytically, so the comparison has no sampling error
in the losses.

We run each of 121 logarithmically spaced weights
in $[10^{-1},10^3]$ from the same initialization and
select the one with the lowest final KL divergence.
We then refine the search around the best tested weight,
obtaining $\beta\approx17.2624$.
The weight remains fixed throughout each run.
KL is used only for selecting and evaluating the weight;
the actor is trained with the regression and MMD losses.

\begin{figure}[!htb]
  \centering
  \sjWeightFigure
  \caption{\textbf{Selecting the joint-loss weight in the Gaussian example.}
  (a) Final KL divergence after 8,000 cycles.
  (b) Learned variances divided by target variances;
  a ratio of one indicates an exact match.
  The dot in (a) and dotted line in (b) mark the
  weight selected by the grid search and refinement.
  The 63 runs with
  $\min_i\sigma_i/\sigma_i^*<10^{-6}$ are shown in (a)
  at the KL lower bound $13.3155$ rather than their
  actual KL values.}
  \label{fig:toy-weights}
\end{figure}

\paragraph{Results.}
Figure~\ref{fig:toy-weights} shows the final KL divergence
and {the variance ratio in each dimension},
$R_i=\sigma_i^2/(\sigma_i^*)^2$.
{Since the fitted policies and the Boltzmann target
are zero-mean Gaussians, the variance ratio indicates
how closely each dimension matches the target,
with a ratio of one indicating an exact match.}
Sequential reaches a KL divergence of $0.000422$,
whereas the selected Joint run reaches $0.512165$.
For Joint, the variance ratios are approximately
$(0.80,3.15)$: the first coordinate is too narrow,
while the second is too wide.
{No choice of $\beta$ makes both variance ratios
equal to one at the same time.
This shows that a single $\beta$ cannot always balance
Drift and Spread across dimensions, even when
the action has only two dimensions.}

Figure~\ref{fig:experiment-toy} shows the fitted
distributions and the effect of reducing the step size.
Keeping $\tau=40$ and the selected Joint weight fixed,
we reduce $\eta$ from $0.01$ to $0.000625$.
Sequential's final KL decreases from $0.001788$ to
$6.29\times10^{-6}$, while Joint's remains near
$0.51$.
These results agree with the different small-step
behavior in \cref{prop:theory-joint-counterexample}.

\begin{figure}[!htbp]
  \centering
  \includegraphics[width=\linewidth]
    {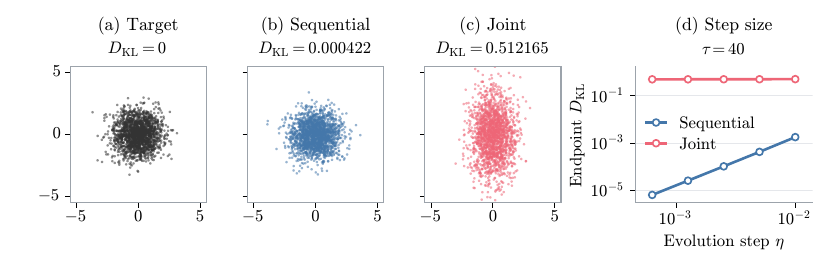}
  \caption{\textbf{Sequential and joint fitting of the Gaussian target.}
  (a) Boltzmann target. (b,c) Samples from the fitted actors.
  Scatter plots use normalized coordinates $a_i/\sigma_i^*$,
  so the target has unit variance in both coordinates.
  (d) Final KL divergence for different step sizes at
  the same evolution time $\tau=40$.
  Joint uses the selected weight $\beta\approx17.2624$.}
  \label{fig:experiment-toy}
\end{figure}

\FloatBarrier
\endgroup

\FloatBarrier
\section{Additional Experimental Results}
\label{sec:appendix-experiments}
\label{sec:appendix-additional-results}

We provide additional experiments to examine how Spread
changes the learned distribution and how its implementation
affects policy learning.
We first extend the ring-target experiment to more
temperatures, then compare
joint-loss weights and frequency scales on MuJoCo.
The corresponding experimental settings are given in
Appendix~\ref{sec:appendix-experimental-details}.

\subsection{Effect of Temperature on the Learned Distribution}
\label{sec:appendix-cfm-comparison}
\label{sec:cfm-toy-eval}

We extend the ring-target experiment in
\cref{sec:experiment-cfm-toy} to
$\lambda\in\{0,0.05,0.1,0.25,0.5,0.75,1\}$.
The critic is unchanged, so increasing $\lambda$
changes the spread of the Boltzmann target without
changing the location of the high-value ring.
This experiment tests whether the actor learns
the effect of the entropy coefficient on its output
distribution.

Figure~\ref{fig:appendix-cfm-ring-extended} shows that
the Boltzmann reference becomes wider around the ring
as $\lambda$ increases.
Drift-only remains concentrated near the ring,
whereas \ours{} learns a broader distribution that
follows this change.
At $\lambda=0$, Spread is omitted, and the reference
is the uniform-circle limit of the Boltzmann targets
as $\lambda$ approaches zero.

\begin{figure}[!htb]
  \centering
  \RingSweepFigure[.96\linewidth]
  \caption{\textbf{Learning the ring-shaped target at different temperatures.}
  Columns vary the entropy coefficient $\lambda$.
  Rows show target samples, Drift-only samples, and
  samples from \ours{}.
  At $\lambda=0$, the reference is uniform on the circle
  and both training procedures omit Spread.}
  \label{fig:appendix-cfm-ring-extended}
  \label{fig:appendix-cfm-ring-sweep}
\end{figure}

\subsection{Sequential Fitting versus Joint-loss Weighting}
\label{sec:experiment-joint-beta}

{We extend the comparison in
Appendix~\ref{sec:appendix-design-analysis} to Humanoid-v3,
using Gaussian Spread noise.}
Joint minimizes
$\mathcal L_{\mathrm{Drift}}
+\beta\mathcal L_{\mathrm{Spread}}^{\mathrm{old}}$
with $\beta\in\{0,0.1,1,10\}$ fixed within each run.
Both targets are constructed from the pre-Drift actor;
the case $\beta=0$ removes the Spread loss.
Sequential instead constructs its Spread target after
the Drift update.
All other training settings are kept the same.

Figure~\ref{fig:experiment-joint-beta} shows that
Sequential achieves higher  mean returns
than Joint for all tested weights.
We can also see that, {compared with the zero-weight control (Drift alone),
adding the Spread loss improves performance for every positive weight considered.}

\begin{figure}[!htbp]
  \centering
  \includegraphics[width=0.60\linewidth]
    {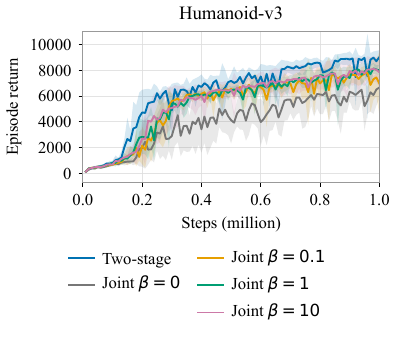}
  \caption{\textbf{Sequential and joint fitting on Humanoid-v3.}
  Joint weights the Spread MMD loss by $\beta$, with
  $\beta=0$ using the Drift loss alone.
  Sequential is labeled \emph{Two-stage} in the figure.
  Curves show unsmoothed means over five seeds, with
  sample standard deviations shaded.}
  \label{fig:experiment-joint-beta}
\end{figure}

\subsection{Sensitivity to the Frequency Scale}
\label{sec:experiment-frequency}
\label{sec:appendix-frequency-sensitivity}

\begin{figure}[!htbp]
  \centering
  \includegraphics[width=0.60\linewidth]
    {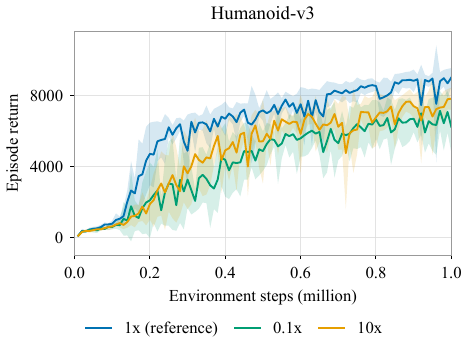}
  \caption{\textbf{Effect of the frequency scale on Humanoid-v3.}
  We multiply the default frequency standard deviation
  by $0.1$, $1$, or $10$, keeping the spreading noise
  and other training settings fixed.
  Curves show unsmoothed means over five seeds, with
  sample standard deviations shaded.}
  \label{fig:experiment-frequency}
\end{figure}

The frequency distribution determines the Gaussian
kernel used to compute the Spread MMD loss.
We test its scale on Humanoid-v3 using
$\sigma_\omega=\alpha_\omega\sigma_{\mathrm G}$,
where $\sigma_{\mathrm G}=(2\lambda\eta d)^{-1/2}$
is the default and
$\alpha_\omega\in\{0.1,1,10\}$.
The Gaussian Spread target and all other training
settings remain fixed.

Figure~\ref{fig:experiment-frequency} shows that
the default scale gives faster learning and higher
late-training mean returns than either change.
Thus the frequency scale affects learning even when
the target distribution is unchanged.

\FloatBarrier

\section{Experimental Details}
\label{sec:appendix-experimental-details}

We first describe the ring-target experiment.
We then give the MuJoCo implementations and training
settings, followed by the evaluation and runtime
measurements and the frequency settings used in the
noise comparison.

\subsection{Ring-target Experiment}
\label{sec:appendix-cfm-toy}

\paragraph{Target distribution.}
We use the two-dimensional critic
\begin{equation}
    Q(a)=-\frac{(\|a\|^2-r_0^2)^2}{4\sigma_Q^2},
    \qquad r_0=1.6,\quad\sigma_Q=0.8.
    \label{eq:cfm-ring-critic}
\end{equation}
For $\lambda>0$, the target is
$\pi_{Q,\lambda}^*(a)\propto\exp(Q(a)/\lambda)$.
To draw reference samples, sample the squared radius
$V$ from $\mathcal N(r_0^2,2\sigma_Q^2\lambda)$
conditioned on $V>0$, and independently sample
$\Theta\sim\operatorname{Uniform}[0,2\pi)$.
Then set $a=\sqrt V(\cos\Theta,\sin\Theta)^\top$.
At $\lambda=0$, use $V=r_0^2$, giving a uniform
distribution on the circle.
Reference samples are used only for evaluation
and visualization.

\paragraph{Actor and initialization.}
The actor is $f_\theta(z)=0.8z+g_\theta(z)$,
where $z\sim\mathcal N(0,I_2)$ and $g_\theta$
is an MLP with two hidden layers of width 64
and $\tanh$ activations.
Its final linear layer is initialized to zero,
so the initial output distribution is
$\mathcal N(0,0.8^2I_2)$.
Drift-only and \ours{} use the same architecture
and initialization.

\paragraph{Training targets.}
For this experiment, the critic gradient flow
can be solved analytically.
We use its exact action movement over duration
$h=0.025$ to construct the Drift target.
Writing $R=r_0^2$ and $v=\|a\|^2$, this movement is
\begin{equation}
    D_h(a)=
    \frac{a}{\sqrt{
        e^{-2Rh/\sigma_Q^2}
        +(v/R)(1-e^{-2Rh/\sigma_Q^2})
    }},
    \qquad D_h(0)=0.
    \label{eq:cfm-ring-drift-target}
\end{equation}
The actor fits the fixed target
$\sg[D_h(f_{\theta_k}(z))]$ through regression
using the same input $z$.
After Drift, we compute the updated actor's empirical
characteristic function and multiply it by
$e^{-\lambda h\|\omega\|^2}$.
We hold this target fixed while optimizing the
Spread MMD loss.

\paragraph{{Toy experiment settings.}}
We train for 400 Drift--Spread cycles.
Each cycle draws 2,048 actor inputs.
For $\lambda>0$, it also draws 128 frequencies from
$\mathcal N(0,\sigma_\omega^2I_2)$ with
$\sigma_\omega=(4\lambda h)^{-1/2}$.
Drift-only uses the same Drift settings and actor
inputs, with Spread omitted.
At $\lambda=0$, \ours{} also omits Spread and
does not draw Spread frequencies.

\paragraph{Displayed samples.}
{Each displayed distribution contains 3,000 samples,
with 600 from each of five independently trained actors.}
Target panels use the same number of points,
and all panels share the same axes.
Every generated sample is obtained with one
actor forward pass.

\subsection{MuJoCo Implementations and Training Settings}
\label{sec:appendix-mujoco-settings}

We use HalfCheetah-v3, Hopper-v3, Walker2d-v3,
Ant-v3, and Humanoid-v3.
Each method is trained for $10^6$ environment
interactions per task using five random seeds.
The compared methods are listed in
\cref{sec:experiment-comparison}.

\paragraph{Baseline implementations.}
We use the official baseline implementations as
references for our JAX implementations.
For methods originally implemented in PyTorch,
we port the implementation to JAX for consistent
runtime comparisons.
We align batch size, network depth and width,
and update-to-data ratio across methods.
Other settings follow the source implementations,
including learning rates, activation functions,
sampling steps, and action-selection rules.
{We adapt DFP from offline-to-online learning to online
learning from scratch using the online RL settings of MFPO.
DPMD and SDAC use MFPO's critic configuration.}

The source repositories for the baseline implementations
are listed in Table~\ref{tab:appendix-baseline-sources}.

\begin{table}[!htbp]
  \centering
  \small
  \caption{Source repositories for the baseline implementations.
  The SAC entry is the implementation referenced by MFPO.
  DPMD and SDAC share the same repository.}
  \label{tab:appendix-baseline-sources}
  \begin{tabular}{@{}lp{0.78\linewidth}@{}}
    \toprule
    Method & Repository \\
    \midrule

    MFPO &
    \href{https://github.com/dongxiaoyi-xyz/MFPO}
         {\nolinkurl{dongxiaoyi-xyz/MFPO}} \\

    DIME &
    \href{https://github.com/ALRhub/DIME}
         {\nolinkurl{ALRhub/DIME}} \\

    FlowRL &
    \href{https://github.com/bytedance/FlowRL}
         {\nolinkurl{bytedance/FlowRL}} \\

    SAC-Flow &
    \href{https://github.com/Elessar123/SAC-FLOW}
         {\nolinkurl{Elessar123/SAC-FLOW}} \\

    MaxEntDP &
    \href{https://github.com/diffusionyes/MaxEntDP}
         {\nolinkurl{diffusionyes/MaxEntDP}} \\

    DACER &
    \href{https://github.com/happy-yan/DACER-Diffusion-with-Online-RL}
         {\nolinkurl{happy-yan/DACER-Diffusion-with-Online-RL}} \\

    QVPO &
    \href{https://github.com/wadx2019/qvpo}
         {\nolinkurl{wadx2019/qvpo}} \\

    TD3 &
    \href{https://github.com/sfujim/TD3}
         {\nolinkurl{sfujim/TD3}} \\

    SAC &
    \href{https://github.com/toshikwa/soft-actor-critic.pytorch}
         {\nolinkurl{toshikwa/soft-actor-critic.pytorch}} \\

    DPMD &
    \href{https://github.com/mahaitongdae/diffusion_policy_online_rl}
         {\nolinkurl{mahaitongdae/diffusion_policy_online_rl}} \\

    SDAC &
    \href{https://github.com/mahaitongdae/diffusion_policy_online_rl}
         {\nolinkurl{mahaitongdae/diffusion_policy_online_rl}} \\

    DFP &
    \href{https://github.com/KAIST-Visual-AI-Group/DFP}
         {\nolinkurl{KAIST-Visual-AI-Group/DFP}} \\

    WPPG &
    \href{https://github.com/Asimov-Chuang/wasserstein-proximal-policy-gradient}
         {\nolinkurl{Asimov-Chuang/wasserstein-proximal-policy-gradient}} \\

    \bottomrule
  \end{tabular}
\end{table}

Table~\ref{tab:baseline-hyperparameters} summarizes the
hyperparameters used for the baseline methods.

\begin{table}[!htbp]
  \centering
  \small
  \setlength{\tabcolsep}{3pt}
  \caption{Hyperparameters of the baseline methods on MuJoCo.
  All methods use a batch size of 256, a discount factor
  $\gamma=0.99$, and an update-to-data ratio of 1.}
  \label{tab:baseline-hyperparameters}

  \begin{tabular}{@{}l*{7}{c}@{}}
    \toprule
    Hyperparameter
      & MFPO & DIME & FlowRL & SAC-Flow
      & MaxEntDP & DACER & QVPO \\
    \midrule
    Target smoothing $\tau$
      & 0.005 & 1 & 0.95 & 1
      & 0.005 & 0.005 & 0.005 \\
    Hidden layers
      & 3 & 3 & 3 & N/A & 3 & 3 & 3 \\
    Hidden nodes per layer
      & 256 & 256 & 256 & N/A & 256 & 256 & 256 \\
    Actor learning rate
      & $3{\times}10^{-4}$ & $3{\times}10^{-4}$
      & $3{\times}10^{-4}$ & $3{\times}10^{-4}$
      & $3{\times}10^{-4}$ & $10^{-4}$
      & $3{\times}10^{-4}$ \\
    Critic learning rate
      & $3{\times}10^{-4}$ & $3{\times}10^{-4}$
      & $3{\times}10^{-4}$ & $10^{-3}$
      & $3{\times}10^{-4}$ & $10^{-4}$
      & $3{\times}10^{-4}$ \\
    Activation
      & GELU & GELU & ELU & GELU & Mish & Mish & Mish \\
    Replay buffer size
      & $10^6$ & $10^6$ & $10^6$ & $10^6$
      & $10^6$ & $10^6$ & $10^6$ \\
    Diffusion / generation steps
      & 2 & 16 & 1 & 4 & 20 & 20 & 20 \\
    Action candidates
      & 10 & N/A & N/A & N/A & 10 & N/A & 32 \\
    \bottomrule
  \end{tabular}

  \par\vspace{8pt}

  \begin{tabular}{@{}l*{6}{c}@{}}
    \toprule
    Hyperparameter
      & TD3 & SAC & DPMD & SDAC & DFP & WPPG \\
    \midrule
    Target smoothing $\tau$
      & 0.005 & 0.005 & 0.005 & 0.005 & 0.005 & 0.005 \\
    Hidden layers
      & 3 & 3 & 3 & 3 & 3 & 3 \\
    Hidden nodes per layer
      & 256 & 256 & 256 & 256 & 256 & 256 \\
    Actor learning rate
      & $3{\times}10^{-4}$ & $3{\times}10^{-4}$
      & $3{\times}10^{-4}$ & $3{\times}10^{-4}$
      & $3{\times}10^{-4}$ & $3{\times}10^{-4}$ \\
    Critic learning rate
      & $3{\times}10^{-4}$ & $3{\times}10^{-4}$
      & $3{\times}10^{-4}$ & $3{\times}10^{-4}$
      & $3{\times}10^{-4}$ & $3{\times}10^{-4}$ \\
    Activation
      & ReLU & ReLU & Mish & Mish & GELU & Tanh \\
    Replay buffer size
      & $10^6$ & $10^6$ & $10^6$ & $10^6$
      & $2{\times}10^6$ & $10^6$ \\
    Diffusion / generation steps
      & N/A & N/A & 20 & 20 & 1 & 1 \\
    Action candidates
      & N/A & N/A & 32 & 32 & 16 & N/A \\
    \bottomrule
  \end{tabular}
\end{table}

\paragraph{Training settings for \ours{}.}
Table~\ref{tab:appendix-shared-hyperparameters}
lists the settings used across all five tasks.
We use the same Drift learning rate on every task
and the Spread learning rates in
Table~\ref{tab:appendix-spread-learning-rates}.
The optimizer learning rates control parameter
updates; the step size $\eta$ controls the action
displacement and spreading-noise variance used
to construct training targets.

\begin{table}[!htbp]
  \centering
  \small
  \setlength{\tabcolsep}{4pt}
  \caption{Training settings for \ours{} on MuJoCo.
  Entries marked $\dagger$ are aligned across all methods.}
  \label{tab:appendix-shared-hyperparameters}
  \label{tab:appendix-drift-cf-hyperparameters}
  \begin{tabular}{@{}p{0.26\linewidth}p{0.17\linewidth}
                    p{0.34\linewidth}p{0.15\linewidth}@{}}
    \toprule
    Parameter & Value & Parameter & Value \\
    \midrule
    Batch size$^\dagger$ & 256
      & Drift samples per state $N_{\mathrm D}$ & 4 \\
    Hidden layers$^\dagger$ & 3
      & Spread samples per state $N_{\mathrm{CF}}$ & 32 \\
    Hidden width$^\dagger$ & 256
      & Frequencies per cycle $M_\omega$ & 64 \\
    Update-to-data ratio$^\dagger$ & 1
      & Spreading noise & Gaussian \\
    Discount $\gamma$ & 0.99
      & Entropy coefficient $\lambda$ & 0.04 \\
    Target update rate & 0.005
      & Step size $\eta$ & 0.1 \\
    Critic learning rate & $3\times10^{-4}$
      & Drift learning rate & $2\times10^{-5}$ \\
    Activation & GELU
      & Evaluation candidates & 10 \\
    Replay buffer size & $10^6$
      & Interactions per seed & $10^6$ \\
    Random seeds & 0--4
      & Frequency distribution & Gaussian \\
    \bottomrule
  \end{tabular}
\end{table}

\begin{table}[!htbp]
  \centering
  \caption{Spread learning rates for \ours{}.
  The same task-specific rates are used in the
  spreading-noise comparison.}
  \label{tab:appendix-spread-learning-rates}
  \begin{tabular}{@{}lc@{}}
    \toprule
    Task & Spread learning rate \\
    \midrule
    HalfCheetah-v3 & $10^{-5}$ \\
    Hopper-v3 & $10^{-5}$ \\
    Walker2d-v3 & $2\times10^{-5}$ \\
    Ant-v3 & $1.5\times10^{-5}$ \\
    Humanoid-v3 & $10^{-5}$ \\
    \bottomrule
  \end{tabular}
\end{table}

\paragraph{Bounded actions.}
The actor produces an unconstrained output
$u=f_\theta(s,z)$, which is mapped to an environment
action by $a=T(u)$.
For each coordinate, $T_i(u_i)=c_i+r_i\tanh u_i$,
where $c_i$ and $r_i$ are the center and half-width
of the action range.
Both actor losses are computed before this mapping.
Drift uses the gradient of
\begin{equation}
    Q^T(s,u)
    =\min_{i\in\{1,2\}}Q_{\phi_i}(s,T(u))
      +\lambda\log J_T(u),
    \qquad J_T(u)=|\det DT(u)|.
    \label{eq:effective-reward}
\end{equation}
The Jacobian term accounts for the change in entropy
under $T$.
Spread acts on the distribution of the updated
actor outputs $u$.
Critic learning uses mapped actions and the
reward-only target in \cref{sec:method-practical}.

\subsection{Evaluation and runtime measurements}
\label{sec:appendix-evaluation-measurements}

\paragraph{Control performance.}
The main MuJoCo learning curves report evaluation
returns averaged over five seeds, with sample standard
deviations shown as shaded bands.
For \ours{} and MFPO, evaluation generates ten
candidate actions and executes the action with the
highest critic value.
Other methods use the action-selection rules of
their reference implementations.

\paragraph{Aggregate normalized return.}
For each task and seed, we average the final ten
evaluation returns.
We divide each result by the corresponding task's
mean return for \ours{}, averaged over its seeds.
We then average the normalized results across
seeds and tasks, giving each task equal weight.
Thus a normalized return of one corresponds to
the \ours{} reference level.
The aggregate error bars are 95\% bootstrap intervals
obtained by resampling seed results while keeping
the task-specific normalization constants fixed.

\paragraph{Training and inference time.}
The methods included in the efficiency comparison
are timed using JAX implementations on the same
NVIDIA RTX 4090 GPU.
We exclude one-time compilation and warm-up from
both training and inference measurements.
Training measurements also exclude evaluation
and experiment-management overhead.

Average training time is the total GPU time for
one seed on all five tasks, each with $10^6$
interactions, divided by five.

Inference time is measured per selected action and
averaged across tasks.
It includes candidate generation, critic scoring,
and final selection when these steps are used.
For \ours{} and MFPO, this includes all ten evaluation
candidates.
Each candidate requires one generation step for
\ours{} and two for MFPO.
The other recorded settings use one step for FlowRL,
four for SAC-Flow, 16 for DIME, and 20 for DACER,
QVPO, and MaxEntDP.

Policy evaluation is performed every 10,000 environment steps using 10 episodes. Each evaluation return is the arithmetic mean
of the undiscounted cumulative rewards over these ten episodes.

\subsection{Frequency Settings for the Noise Comparison}
\label{sec:appendix-frequency-targets}
\label{sec:appendix-frequency-implementation}

The spreading noise determines the target distribution,
while the frequency distribution determines the MMD
kernel used to compare it with the actor.
For every noise choice, we sample
$\omega=\sigma_\omega Z$ with
$Z\sim\mathcal N(0,I_d)$.
The resulting Gaussian kernel is
$h(u,v)=\exp(-\sigma_\omega^2\|u-v\|^2/2)$.
Frequencies are resampled in each actor-update cycle;
their distribution is fixed throughout a run.

For the default Gaussian noise, we use
$\sigma_{\mathrm G}=(2\lambda\eta d)^{-1/2}$,
as described in Appendix~\ref{sec:appendix-frequency}.
Let $m_\eta$ denote the characteristic function of
the spreading noise.
At the default scale, the Gaussian noise factor
has average value
\begin{equation}
    \mu_*=
    \E_Z\left[
        e^{-\lambda\eta\sigma_{\mathrm G}^2\|Z\|^2}
    \right]
    =(1+1/d)^{-d/2}.
    \label{eq:frequency-reference-mean}
\end{equation}
For each alternative, we choose its frequency scale
so that its noise factor has the same average value:
\begin{equation}
    F(\sigma_\omega)=\mu_*,
    \qquad
    F(\sigma)=\E_Z[m_\eta(\sigma Z)].
    \label{eq:frequency-matched-mean}
\end{equation}
This provides a common rule for setting the frequency
scale across noise choices.

To compute the scale, let $\epsilon_\eta$ be a noise
sample independent of $Z$.
Averaging over the Gaussian frequency first gives
\begin{equation}
    F(\sigma)
    =\E_{\epsilon_\eta}
      \left[e^{-\sigma^2\|\epsilon_\eta\|^2/2}\right].
    \label{eq:frequency-increment-duality}
\end{equation}
For the noise distributions used here, this function
decreases continuously from one to zero.
The required expectations have closed forms or reduce
to one-dimensional integrals.
We evaluate them numerically and solve
\cref{eq:frequency-matched-mean} by bisection.

For the Gamma-mixture noise used in the comparison,
the shape parameter is fixed at {$\kappa=4$}.

\end{document}